\newcommand{\mathbbm}[1]{\text{\usefont{U}{bbm}{m}{n}#1}}
\DeclareMathOperator\rk{rank}
\DeclareMathOperator\spn{span}
\DeclareMathOperator\C{C}
\DeclareMathOperator\join{Join}
\newcommand{\tp}{{\scriptscriptstyle\mathsf{T}}}
\begin{document}

\title{Best $k$-layer neural network approximations}


\author{Lek-Heng Lim \and
Mateusz Micha\l ek \and
Yang Qi}

\authorrunning{L.-H.~Lim, M.~Micha\l ek, Y.~Qi} 

\institute{L.-H.~Lim \at
Department of Statistics, University of Chicago, Chicago, IL 60637 \\
\email{lekheng@galton.uchicago.edu}
\and
M.~Micha\l ek \at
Max Planck Institute for Mathematics in the Sciences, 04103 Leipzig, Germany \\
Polish Academy of Sciences, 00656 Warsaw, Poland\\
\email{Mateusz.Michalek@mis.mpg.de}
\and
Y.~Qi \at
Department of Mathematics, University of Chicago, Chicago, IL 60637 \\
\email{yangqi@math.uchicago.edu}
}

\date{Received: date / Accepted: date}

\maketitle

\begin{abstract}
We show that the empirical risk minimization (ERM) problem for neural networks has no solution in general. Given a training set $s_1, \dots, s_n \in \mathbb{R}^p$ with corresponding responses $t_1,\dots,t_n \in \mathbb{R}^q$, fitting a $k$-layer neural network $\nu_\theta : \mathbb{R}^p \to \mathbb{R}^q$ involves estimation of the weights $\theta \in \mathbb{R}^m$ via an ERM:
\[
\inf_{\theta \in \mathbb{R}^m} \; \sum_{i=1}^n \lVert t_i - \nu_\theta(s_i) \rVert_2^2.
\]
We show that even for $k = 2$, this infimum is not attainable in general for common activations like ReLU, hyperbolic tangent, and sigmoid functions. A high-level explanation is like that for the  nonexistence of best rank-$r$ approximations of higher-order tensors --- the set of parameters is not a closed set --- but the geometry involved for best $k$-layer neural networks approximations is more subtle. In addition, we show that for smooth activations $\sigma(x)= 1/\bigl(1 + \exp(-x)\bigr)$ and $\sigma(x)=\tanh(x)$, such failure to attain an infimum can happen on  a positive-measured subset of responses. For the ReLU activation $\sigma(x)=\max(0,x)$, we completely classifying cases where the ERM for a best two-layer neural network approximation attains its infimum. As an aside, we obtain a precise description of the geometry of the space of two-layer neural networks with $d$ neurons in the hidden layer: it is the join locus of a line and the $d$-secant locus of a cone.
\keywords{neural network \and best approximation \and join loci \and secant loci}
\subclass{92B20 \and 41A50 \and 41A30}
\end{abstract}

\section{Introduction}\label{sec:intro}

Let $\alpha_i : \mathbb{R}^{d_i} \to \mathbb{R}^{d_{i+1}}$, $x \mapsto A_i x + b_i$ be an affine function with $A_i \in \mathbb{R}^{d_{i+1} \times d_i}$ and $b_i \in \mathbb{R}^{d_{i+1}}$, $i =1,\dots,k$. Given any fixed \emph{activation} function $\sigma : \mathbb{R} \to \mathbb{R}$, we will abuse notation slightly by also writing $\sigma : \mathbb{R}^d \to \mathbb{R}^d$ for the function where $\sigma$ is applied coordinatewise, i.e., $\sigma(x_1,\dots,x_d) = (\sigma(x_1), \dots, \sigma(x_d))$, for any $d \in \mathbb{N}$. Consider a $k$-layer neural network $\nu : \mathbb{R}^{p} \to \mathbb{R}^{q}$,
\begin{equation}\label{eq:knn}
\nu =\alpha_k \circ \sigma \circ \alpha_{k-1} \circ \dots \circ \sigma \circ \alpha_2 \circ \sigma \circ \alpha_1,
\end{equation}
obtained from alternately composing $\sigma$ with affine functions  $k$ times. Note that such a function $\nu$ is parameterized (and completely determined) by its \emph{weights} $\theta \coloneqq (A_k,b_k, \dots, A_1, b_1)$ in
\begin{equation}\label{eq:weights}
\Theta \coloneqq ( \mathbb{R}^{d_{k+1} \times d_k} \times \mathbb{R}^{d_{k+1}} ) \times \dots \times ( \mathbb{R}^{d_2 \times d_1} \times \mathbb{R}^{d_2} ) \cong \mathbb{R}^{m}.
\end{equation}
Here and throughout this article,
\begin{equation}\label{eq:dim}
m \coloneqq \sum_{i=1}^k  (d_i + 1)d_{i+1}
\end{equation}
will always denote the number of weights that parameterize $\nu$;
\[
p \coloneqq d_1 \qquad \text{and} \qquad  q \coloneqq d_{k+1}
\]
will always denote the dimensions of the first and last layers. In neural networks lingo, the dimension of the $i$th layer $d_i$ is also called the \emph{number of neurons} in the $i$th layer.
Whenever it is necessary to emphasize the dependence of $\nu$ on $\theta$, we will write $\nu_\theta$ for a $k$-layer neural network parameterized by $\theta \in \Theta$.

Traditional studies of neural networks in approximation theory typically focus on the \emph{function approximation problem} with\footnote{Results may be extended to $q > 1$ by applying them coordinatewise.} $q=1$, i.e., given a target function $f : \mathbb{R}^p \to \mathbb{R}$ in some Banach space $\mathcal{B}$, how well can it be approximated by a neural network $\nu_\theta : \mathbb{R}^p \to \mathbb{R}$ in the Banach space norm $\lVert\, \cdot\, \rVert_{\mathcal{B}}$? In other words,  one is interested in the problem
\begin{equation}\label{eq:approxBanach}
\inf_{\theta \in \Theta} \; \lVert f - \nu_\theta \rVert_{\mathcal{B}}.
\end{equation}
The most celebrated results along these lines are the universal approximation theorems of Cybenko \cite{Cybenko}, for sigmoidal activation and $L^1$-norm, as well as those of Hornik et al.\ \cite{Hornik89,Hornik91}, for more general activations such as ReLU and $L^p$-norms, $1 \le p \le \infty$. These results essentially say that the infimum in \eqref{eq:approxBanach} is zero as long as $k$ is at least two (but with no bound on $d_2$).

Nevertheless, in reality, one does not solve the problem \eqref{eq:approxBanach}  when training a neural network but a parameter estimation problem called the \emph{empirical risk minimization problem}. Let $s_1,\dots,s_n \in \mathbb{R}^p$ be a sample of $n$ independent, identically distributed observations with corresponding \emph{responses} $t_1,\dots,t_n \in \mathbb{R}^q$. The main computational problem in supervised learning with neural networks is to fit the \emph{training set} $\{(s_i, t_i) \in \mathbb{R}^p \times \mathbb{R}^q : i = 1,\dots,n\}$ with a $k$-layer neural network $\nu_\theta :\mathbb{R}^p \to \mathbb{R}^q$ so that
\[
t_i \approx \nu_\theta(s_i), \quad i =1,\dots,n,
\]
often in the least-squares sense
\begin{equation}\label{eq:approx1}
\inf_{\theta \in \Theta}\;  \sum_{i=1}^n \lVert t_i - \nu_\theta(s_i) \rVert_2^2.
\end{equation}
The responses are regarded as values of the unknown function $f : \mathbb{R}^p \to \mathbb{R}^q$ to be learned, i.e., $t_i = f(s_i)$, $i =1,\dots,n$. The hope is that by solving \eqref{eq:approx1} for $\theta^* \in \mathbb{R}^m$, the neural network obtained $\nu_{\theta^*}$ will approximate $f$ well in the sense of having small generalization errors, i.e., $f(s) \approx \nu_{\theta^*}(s)$ for $s \notin \{s_1,\dots,s_n\}$. This hope has been borne out empirically in spectacular ways \cite{Alex,LBH,Go}.

The problem \eqref{eq:approxBanach} asks how well a given target function can be approximated by a given function class, in this case the class of $k$-layer $\sigma$-activated neural networks. This is an infinite-dimensional problem. On the other hand \eqref{eq:approx1} simply asks how well the approximation is at finitely many sample points, a finite-dimensional problem, and therefore amenable to techniques in geometry.

In \cite{QML}, we applied methods from algebraic and differential geometry to study the empirical risk minimization problem corresponding to \emph{nonlinear approximation}, i.e., where one seeks to approximate a target function by a sum of $k$ atoms $\varphi_1,\dots,\varphi_k$ from a dictionary $D$,
\[
\inf_{\varphi_i \in D} \; \lVert f - \varphi_1 - \varphi_2 - \dots - \varphi_k \rVert_{\mathcal{B}}.
\]
If we denote the layers of a neural network by $\varphi_i \in L$, then \eqref{eq:approxBanach} may be written in a form that parallels the above:
\[
\inf_{\varphi_i \in L} \; \lVert f - \varphi_k \circ \varphi_{k-1} \circ \dots \circ \varphi_1 \rVert_{\mathcal{B}}.
\]
Again our goal is to study the corresponding empirical risk minimization problem, i.e., the approximation problem \eqref{eq:approx1}. The first surprise is that this may not always have a solution. For example, take $n = 6$ and $p=q=2$ with
\begin{alignat*}{12}
s_1 &= \begin{bmatrix*}[r] -2 \\ 0 \end{bmatrix*},\;  & 
s_2 &= \begin{bmatrix*}[r] -1 \\ 0 \end{bmatrix*},\;  & 
s_3 &= \begin{bmatrix*}[r] 0 \\ 0 \end{bmatrix*},\;  & 
s_4 &= \begin{bmatrix*}[r] 1 \\ 0 \end{bmatrix*},\;  & 
s_5 &= \begin{bmatrix*}[r] 2 \\ 0 \end{bmatrix*},\;  & 
s_6 &= \begin{bmatrix*}[r] 1 \\ 1 \end{bmatrix*},\\
t_1 &= \begin{bmatrix*}[r] 2 \\ 0 \end{bmatrix*},\;  & 
t_2 &= \begin{bmatrix*}[r] 1 \\ 0 \end{bmatrix*},\;  & 
t_3 &= \begin{bmatrix*}[r] 0 \\ 0 \end{bmatrix*},\;  & 
t_4 &= \begin{bmatrix*}[r] -2 \\ 0 \end{bmatrix*},\;  & 
t_5 &= \begin{bmatrix*}[r] -4 \\ 0 \end{bmatrix*},\;  & 
t_6 &= \begin{bmatrix*}[r] 0 \\ 1 \end{bmatrix*}.
\end{alignat*}
For a ReLU-activated two-layer neural network, the approximation problem \eqref{eq:approx1} seeks weights $\theta = (A,b,B,c)$  that attain the infimum over all  $A, B \in \mathbb{R}^{2 \times 2}$, $b,c \in \mathbb{R}^2$ of the loss function
\[
\begin{adjustbox}{width=\textwidth,totalheight=\textheight,keepaspectratio}
$\begin{aligned}
&\biggl\lVert
\begin{bmatrix*}[r]
2\\
0
\end{bmatrix*} -  \biggl[
 B \max\biggl(A
\begin{bmatrix*}[r] -2
\\ 0
\end{bmatrix*}  + b, \begin{bmatrix*}[r] 0 \\ 0 \end{bmatrix*} \biggr) + c \biggr] \biggr\rVert^2
+
\biggl\lVert
\begin{bmatrix*}[r]
1\\
0
\end{bmatrix*} -  \biggl[
 B \max\biggl(A
\begin{bmatrix*}[r] -1
\\ 0
\end{bmatrix*}  + b, \begin{bmatrix*}[r] 0 \\ 0 \end{bmatrix*} \biggr) + c \biggr] \biggr\rVert^2
\\
&\quad+\biggl\lVert
\begin{bmatrix*}[r]
0\\
0
\end{bmatrix*} -  \biggl[
 B \max\biggl(A
\begin{bmatrix*}[r] 0
\\ 0
\end{bmatrix*}  + b, \begin{bmatrix*}[r] 0 \\ 0 \end{bmatrix*} \biggr) + c \biggr] \biggr\rVert^2
+
\biggl\lVert
\begin{bmatrix*}[r]
-2\\
0
\end{bmatrix*} -  \biggl[
 B \max\biggl(A
\begin{bmatrix*}[r] 1
\\ 0
\end{bmatrix*}  + b, \begin{bmatrix*}[r] 0 \\ 0 \end{bmatrix*} \biggr) + c \biggr] \biggr\rVert^2
\\
&\quad\quad+\biggl\lVert
\begin{bmatrix*}[r]
-4\\
0
\end{bmatrix*} -  \biggl[
 B \max\biggl(A
\begin{bmatrix*}[r] 2
\\ 0
\end{bmatrix*}  + b, \begin{bmatrix*}[r] 0 \\ 0 \end{bmatrix*} \biggr) + c \biggr] \biggr\rVert^2
+
\biggl\lVert
\begin{bmatrix*}[r]
0\\
1
\end{bmatrix*} -  \biggl[
 B \max\biggl(A
\begin{bmatrix*}[r] 1
\\ 1
\end{bmatrix*}  + b, \begin{bmatrix*}[r] 0 \\ 0 \end{bmatrix*} \biggr) + c \biggr] \biggr\rVert^2.
\end{aligned}$
\end{adjustbox}
\]
We  will see in the proof of Theorem~\ref{thm:nonclosed} that  this has no solution. Any sequence of $\theta = (A,b,B,c)$ chosen so that the loss function  converges to its infimum will have $\lVert\theta \rVert^2 = \lVert A \rVert_F^2 + \lVert b \rVert_2^2 + \lVert B\rVert_F^2 + \lVert c\rVert_2^2$ becoming unbounded --- the entries of $\theta$ will diverge to $\pm \infty$ in such a way that keeps the loss function bounded and in fact convergent to its infimum.

Note that we have assumed the Euclidean norm
\[
\lVert \theta \rVert^2 \coloneqq \sum_{i=1}^k \lVert A_i \rVert^2_F + \lVert b_i \rVert_2^2
\]
on $\Theta$ but the results in this article will be independent of the choice of norms (as all norms are equivalent on finite-dimensional spaces).

\section{Geometry of  empirical risk minimization for neural networks}

Given that we are interested in the behavior of $\nu_{\theta}$  as a function of weights $\theta$, we will rephrase \eqref{eq:approx1} to put it on more relevant footing. Let the sample $s_1,\dots,s_n \in \mathbb{R}^p$ and  responses $t_1,\dots,t_n \in \mathbb{R}^q$  be arbitrary but fixed. Henceforth we will assemble the sample into a \emph{design matrix},
\begin{equation}\label{eq:design}
S \coloneqq \begin{bmatrix} s_1^\tp \\ s_2^\tp \\ \vdots \\ s_n^\tp \end{bmatrix} \in \mathbb{R}^{n\times p},
\end{equation}
and the corresponding responses into a \emph{response matrix},
\begin{equation}\label{eq:target}
T \coloneqq \begin{bmatrix} t_1^\tp \\ t_2^\tp \\ \vdots \\ t_n^\tp \end{bmatrix} \in \mathbb{R}^{n\times q}.
\end{equation}
Here and for the rest of this article, we use the following numerical linear algebra conventions:
\begin{itemize}
\item a vector $a \in \mathbb{R}^n$ will always be regarded as a column vector; 

\item a row vector will always be denoted $a^\tp$ for some column vector $a$;

\item a matrix $A \in \mathbb{R}^{n \times p}$ is denoted as a list of its column vectors in the form $A = [a_1,\dots,a_p]$;

\item or as a list of its row vectors in the form  $A = [\alpha_1^\tp,\dots,\alpha_n^\tp]^\tp$.
\end{itemize}
We will also adopt the convention that treats a \emph{direct sum} of $p$ subspaces (resp.\ cones) in $\mathbb{R}^n$ as a subspace (resp.\ cone) of $\mathbb{R}^{n \times p}$: If $V_1,\dots,V_p \subseteq \mathbb{R}^n$ are subspaces (resp.\ cones), then
\begin{equation}\label{eq:ds}
V_1 \oplus \dots \oplus V_p \coloneqq \{ [ v_1,\dots, v_p ] \in \mathbb{R}^{n \times p} : v_1 \in V_1, \dots, v_p \in V_p\}.
\end{equation}

Let $\nu_\theta : \mathbb{R}^p \to \mathbb{R}^q$ be a $k$-layer neural network, $k \ge 2$. We define the \emph{weights map} $\psi_k : \Theta  \to \mathbb{R}^{n \times q}$ by
\[
\psi_k(\theta) = \begin{bmatrix} \nu_\theta(s_1)^\tp\\ \nu_\theta(s_2)^\tp \\ \vdots \\ \nu_\theta(s_n)^\tp \end{bmatrix}  \in \mathbb{R}^{n\times q}.
\]
In other words, for a fixed sample, $\psi_k$ is $\nu_\theta$ regarded as a function of the weights $\theta$.
The empirical risk minimization problem  is \eqref{eq:approx1} rewritten as
\begin{equation}\label{eq:approx2}
\inf_{\theta \in \Theta}\; \lVert T - \psi_k(\theta) \rVert_F,
\end{equation}
where $\lVert \, \cdot \, \rVert_F$ denotes the Frobenius norm. We may view \eqref{eq:approx2} as a matrix approximation problem --- finding a matrix in
\begin{equation}\label{eq:image}
\psi_k(\Theta) = \{\psi_k(\theta) \in \mathbb{R}^{n\times q} : \theta \in \Theta\}
\end{equation}
that is nearest to a given matrix $T \in \mathbb{R}^{n \times q}$.
\begin{definition}\label{def:image}
We will call the set $\psi_k(\Theta)$ the \emph{image of weights} of the $k$-layer neural network $\nu_\theta$ and the corresponding problem \eqref{eq:approx2} a \emph{best $k$-layer neural network approximation problem}.
\end{definition}
As we noted in \eqref{eq:weights}, the space of all weights $\Theta$ is essentially the Euclidean space $\mathbb{R}^m$ and uninteresting geometrically, but the image of weights $\psi_k(\Theta)$, as we will see in this article, has complicated geometry (e.g., for $k=2$ and ReLU activation, it is the join locus of a line and the secant locus of a cone --- see Theorem~\ref{thm:ab1}). In fact, the geometry of the neural network \emph{is} the geometry of  $\psi_k(\Theta)$. We expect that it will be pertinent to understand this geometry if one wants to understand neural networks at a deeper level. For one, the nontrivial geometry of $\psi_k(\Theta)$ is the reason that the best $k$-layer neural network approximation problem, which is to find a point in  $\psi_k(\Theta)$ closest to a given $T \in \mathbb{R}^{n \times q}$, lacks a solution in general.

Indeed, the most immediate mathematical  issues with the approximation problem \eqref{eq:approx2} are the existence and uniqueness of solutions:
\begin{enumerate}[\upshape (i)]
\item\label{exist} a nearest point may not exist since the set $\psi_k(\Theta) $ may not be a closed subset of $\mathbb{R}^{n \times q}$, i.e., the infimum in \eqref{eq:approx2} may not be attainable;

\item\label{unique} even if it exists, the nearest point may not be unique, i.e., the infimum in \eqref{eq:approx2} may be  attained by two or more points in $\psi_k(\Theta)$.
\end{enumerate}
As a reminder a problem is said to be \emph{ill-posed} if it lacks existence and uniqueness guarantees. Ill-posedness creates numerous difficulties both logical (what does it mean to find a solution when it does not exist?) and practical (which solution do we find when there are more than one?). In addition, a well-posed problem near an ill-posed one is the very definition of an \emph{ill-conditioned} problem \cite{condition}, which presents its own set of difficulties. In general, ill-posed problems are not only to be avoided but also delineated to reveal the region of ill-conditioning.

For the function approximation problem \eqref{eq:approxBanach}, the nonexistence issue of a best neural network approximant is very well-known, dating back to \cite{GP}. But for the best $k$-layer neural network approximation problem \eqref{eq:approx1} or \eqref{eq:approx2}, its well-posedness has never been studied, to the best of our knowledge. Our article seeks to address this gap.  The geometry of the set in \eqref{eq:image} will play an important role in studying these problems, much like the role played by the geometry of rank-$k$ tensors in \cite{QML}.

We will  show that for many networks, the problem \eqref{eq:approx2} is ill-posed. We have already mentioned an explicit example at the end of Section~\ref{sec:intro} for the ReLU activation:
\begin{equation}\label{eq:relu}
\sigma_{\max}(x)\coloneqq \max(0,x)
\end{equation}
where \eqref{eq:approx2} lacks a solution; we will discuss this in detail in Section~\ref{sec:shallow}. Perhaps more surprisingly, for the  sigmoidal and hyperbolic tangent activation functions:
\[
\sigma_{\exp}(x)\coloneqq \frac{1}{1 + \exp(-x)}\quad\text{and}\quad\sigma_{\tanh}(x)\coloneqq \tanh(x),
\]
we will see in Section~\ref{sec:smooth} that \eqref{eq:approx2} lacks a solution with positive probability, i.e., there exists an open set $U\subseteq \mathbb{R}^{n \times q}$ such that for any $T\in U$ there is no nearest point in $\psi_k(\Theta)$. Similar phenomenon is known for real tensors \cite[Section~8]{dSL}. 

For neural networks with ReLU activation, we are unable to establish similar ``failure with positive probability'' results but the geometry of the problem \eqref{eq:approx2} is actually simpler in this case. For two-layer ReLU-activated network, we can  completely characterize the geometry of $\psi_2(\Theta)$, which provides us with greater insights as to why \eqref{eq:approx2} generally lacks a solution. We can also determine the  dimension of  $\psi_2(\Theta)$ in many instances. These will be discussed in Section~\ref{sec:geom}.

The following map will play a key role in this article and we give it a name to facilitate exposition.
\begin{definition}[ReLU projection]
The map $\sigma_{\max} : \mathbb{R}^d \to \mathbb{R}^d$  where the ReLU activation \eqref{eq:relu} is applied coordinatewise will be called a \emph{ReLU projection}.
For any $\Omega \subseteq \mathbb{R}^d$, $\sigma_{\max}(\Omega) \subseteq \mathbb{R}^d$ will be called a ReLU projection of $\Omega$. 
\end{definition}
Note that a ReLU projection is a linear projection when restricted to any orthant of $\mathbb{R}^d$. 

\section{Geometry of a ``one-layer neural network''}\label{sec:one}

We start by studying the `first part' of a two-layer ReLU-activated neural network:
\[
\mathbb{R}^p \xrightarrow{\alpha}\mathbb{R}^q \xrightarrow{\sigma_{\max}}\mathbb{R}^q
\]
and, slightly abusing terminologies, call this a \emph{one-layer ReLU-activated neural network}.
Note that the weights here are $\theta = (A,b) \in \mathbb{R}^{q \times (p+1)} = \Theta$ with $A \in \mathbb{R}^{q \times p}$, $b \in \mathbb{R}^q$ that define the affine map $\alpha(x) = Ax+ b$.

Let the sample  $S = [s_1^\tp,\dots,s_n^\tp]^\tp \in \mathbb{R}^{n \times p}$ be fixed. Define the weight map $\psi_1: \Theta \to \mathbb{R}^{n \times q}$ by
\begin{equation}\label{eq:psi1}
\psi_1(A,b) = \sigma_{\max} \left(\begin{bmatrix} A s_1 + b \\  \vdots \\ A s_n +b \end{bmatrix} \right),
\end{equation}
where $\sigma_{\max}$ is applied coordinatewise.

Recall that a  cone $C \subseteq \mathbb{R}^d$ is simply a set invariant under scaling by positive scalars, i.e., if $x \in C$, then $\lambda x \in C$ for all $\lambda > 0$. The dimension of a cone is the dimension of the smallest subspace that contains it, i.e., $\dim C = \dim \spn(C)$.
\begin{definition}[ReLU cone]\label{def:ReLUcone}
Let $S = [s_1^\tp,\dots,s_n^\tp]^\tp \in \mathbb{R}^{n \times p}$. The \emph{ReLU cone} of $S$ is the set
\[
\C_{\max}(S) \coloneqq \left\{ \sigma_{\max} \left(\begin{bmatrix} a^\tp  s_1 + b \\  \vdots \\ a^\tp s_n +b \end{bmatrix} \right) \in \mathbb{R}^n : a \in \mathbb{R}^p, \; b\in \mathbb{R} \right\}.
\]
\end{definition}
The ReLU cone is clearly a cone. Such cones will form the building blocks for the  image of weights $\psi_k(\Theta)$. In fact, it is easy to see that $\C_{\max}(S)$ is exactly $\psi_1(\Theta)$ in case when $q =1$. The next lemma describes the geometry of ReLU cones in greater detail.

\begin{lemma}\label{lem:veryshort}
Let $S = [s_1^\tp,\dots,s_n^\tp]^\tp \in \mathbb{R}^{n \times p}$.
\begin{enumerate}[\upshape (i)]
\item The set $\C_{\max}(S)$ is always a closed pointed cone of dimension
\begin{equation}\label{eq:dim}
\dim \C_{\max}(S)  = \rk [S, \mathbbm{1}].
\end{equation}
Here $[S, \mathbbm{1}] \in \mathbb{R}^{n \times (p+1)}$ is augmented with an extra column $\mathbbm{1}  \coloneqq [1,\dots,1]^\tp \in \mathbb{R}^n$, the vector of all ones.

\item A set $C \subseteq \mathbb{R}^n$ is a ReLU cone if and only if it is a ReLU projection of some linear subspace in $\mathbb{R}^n$ containing the vector $\mathbbm{1}$.
\end{enumerate}
\end{lemma}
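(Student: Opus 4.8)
The plan is to exhibit the ReLU cone as a single ReLU projection of a linear subspace and then to read off all three assertions from the piecewise-linear structure of $\sigma_{\max}$; the only genuinely delicate point will be the dimension count.

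First I would rewrite the defining vector: its $i$th coordinate $a^\tp s_i + b$ is exactly the $i$th coordinate of $Sa + b\mathbbm{1}$, so
\[
\C_{\max}(S) = \{\sigma_{\max}(Sa + b\mathbbm{1}) : a \in \mathbb{R}^p,\ b \in \mathbb{R}\} = \sigma_{\max}(V), \qquad V \coloneqq \im[S, \mathbbm{1}].
\]
Here $V$ is the column space of $[S,\mathbbm{1}]$, a linear subspace of $\mathbb{R}^n$ that contains $\mathbbm{1}$ (set $a = 0$, $b = 1$) and has $\dim V = \rk[S,\mathbbm{1}]$. This identity already yields the forward implication of (ii): every ReLU cone is a ReLU projection of a subspace containing $\mathbbm{1}$. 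For the reverse implication, given any subspace $V \ni \mathbbm{1}$ I would choose $v_1,\dots,v_r$ with $V = \spn(\mathbbm{1}, v_1,\dots,v_r)$, set $S \coloneqq [v_1,\dots,v_r]$, and observe that $\im[S,\mathbbm{1}] = V$, so $\sigma_{\max}(V) = \C_{\max}(S)$ is a ReLU cone. Thus (ii) is essentially a restatement of the displayed identity.

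For (i), the cone property and pointedness are immediate. Since $\sigma_{\max}$ is positively homogeneous, $\sigma_{\max}(V)$ is invariant under positive scaling, hence a cone; and since every image point lies in the nonnegative orthant $\mathbb{R}^n_{\ge 0}$, which is pointed, $\C_{\max}(S) \cap (-\C_{\max}(S)) \subseteq \mathbb{R}^n_{\ge 0} \cap (-\mathbb{R}^n_{\ge 0}) = \{0\}$. For closedness I would stratify $\mathbb{R}^n$ into the $2^n$ closed sign regions $R_I \coloneqq \{x \in \mathbb{R}^n : x_i \ge 0 \text{ for } i \in I,\ x_i \le 0 \text{ for } i \notin I\}$, indexed by $I \subseteq \{1,\dots,n\}$. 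On $R_I$ the ReLU projection coincides with the linear coordinate projection $P_I$ that zeroes the coordinates outside $I$, so that
\[
\C_{\max}(S) = \bigcup_{I \subseteq \{1,\dots,n\}} P_I(V \cap R_I).
\]
Each $V \cap R_I$ is the intersection of a subspace with a polyhedral cone, hence a finitely generated cone; its linear image $P_I(V \cap R_I)$ is again finitely generated, hence a closed polyhedral cone. A finite union of closed sets is closed, giving closedness of $\C_{\max}(S)$.

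The dimension formula is the crux, and I expect it to be the main obstacle, since the other assertions are direct consequences of the identity $\C_{\max}(S) = \sigma_{\max}(V)$. The upper bound follows from the same stratification: each stratum $P_I(V \cap R_I)$ is a linear image of a subset of $V$, so it is a polyhedral cone of dimension at most $\dim V$, and the dimension of the finite union is the largest stratum dimension, hence at most $\dim V = \rk[S,\mathbbm{1}]$. For the matching lower bound I would use that $\mathbbm{1} \in V$ is strictly positive, so it lies in the interior of $R_{\{1,\dots,n\}}$; thus a relatively open neighborhood $N$ of $\mathbbm{1}$ in $V$ is contained in the open positive orthant, where $\sigma_{\max}$ restricts to the identity. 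Then $N = \sigma_{\max}(N) \subseteq \C_{\max}(S)$, and as $N$ is a full-dimensional piece of $V$ this forces $\dim \C_{\max}(S) \ge \dim V$. The delicate point is precisely this reconciliation of the global, non-convex union with the purely local linear behavior of $\sigma_{\max}$ near $\mathbbm{1}$: the upper bound must be taken stratum by stratum rather than from the span of the whole cone, while the lower bound rests entirely on the strict positivity of $\mathbbm{1}$. Combining the two bounds yields $\dim \C_{\max}(S) = \rk[S,\mathbbm{1}]$.
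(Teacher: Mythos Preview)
Your argument is correct and follows exactly the paper's route: identify $\C_{\max}(S)=\sigma_{\max}(V)$ with $V=\im[S,\mathbbm{1}]\ni\mathbbm{1}$, bound the dimension from above using that $\sigma_{\max}$ is an orthant-wise linear projection, and from below using that $\mathbbm{1}$ lies in the open positive orthant where $\sigma_{\max}$ is the identity; the converse of (ii) is handled the same way in both. You supply considerably more detail than the paper---in particular a full polyhedral stratification argument for closedness, which the paper simply asserts is ``clear''---but the logical structure is identical.
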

\begin{proof}
Consider the map in \eqref{eq:psi1} with $q=1$. Then $\psi_1 :\Theta \to \mathbb{R}^n$ is given by a composition of the linear map
\[
\Theta \to \mathbb{R}^n, \quad \begin{bmatrix}a \\ b \end{bmatrix} \mapsto \begin{bmatrix} s_1^\tp a + b \\ \vdots \\ s_n^\tp a + b \end{bmatrix} =  [S, \mathbbm{1}] \begin{bmatrix} a\\ b\end{bmatrix} ,
\]
whose image  is a linear space $L$ of dimension $\rk [S, \mathbbm{1}]$ containing $\mathbbm{1}$, and  the  ReLU projection $\sigma_{\max} : \mathbb{R}^n\rightarrow\mathbb{R}^n$. Since $\C_{\max}(S) = \psi_1(\Theta) $, it is a ReLU projection of a linear subspace in $\mathbb{R}^n$, which is clearly a closed pointed cone.

For its dimension, note that a ReLU projection is a linear projection on each quadrant and thus cannot increase dimension. On the other hand, since $\mathbbm{1}\in L$, we know that $L$ intersects the interior of the nonnegative quadrant on which $\sigma_{\max}$ is the identity; thus $\sigma_{\max}$ preserves dimension and we have \eqref{eq:dim}. 

Conversely, a ReLU projection of any linear space $L$ may be realized as $\C_{\max}(S) $ for some choice of $S$ --- just choose $S$ so that the image of the matrix $[S, \mathbbm{1}]$ is $L$. \qed
\end{proof}

It follows from Lemma~\ref{lem:veryshort} that the weights of a one-layer ReLU neural network has the geometry of a direct sum of $q$ closed pointed cones. Recall our convention for direct sum in \eqref{eq:ds}.
\begin{corollary}\label{cor:product}
Consider the one-layer ReLU-activated neural network
\[
\mathbb{R}^p \xrightarrow{\alpha_1}\mathbb{R}^q \xrightarrow{\sigma_{\max}}\mathbb{R}^q.
\]
Let $S \in \mathbb{R}^{n \times p}$. Then $\psi_1(\Theta)\subseteq \mathbb{R}^{n \times q}$ has the structure of a direct sum of $q$ copies of $\C_{\max}(S) \subseteq \mathbb{R}^n$. More precisely,
\begin{equation}\label{eq:cmaxq}
\psi_1(\Theta) = \{ [v_1,\dots, v_q] \in \mathbb{R}^{n \times q} : v_1,\dots,v_q \in  \C_{\max}(S) \}.
\end{equation}
In particular, $\psi_1(\Theta)$ is a closed pointed cone of dimension $q \cdot \rk [S, \mathbbm{1}]$ in $\mathbb{R}^{n \times q}$.
\end{corollary}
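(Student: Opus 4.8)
The plan is to show that the single substantive content of the corollary is a \emph{decoupling} of the $q$ output coordinates, after which the remaining assertions follow from Lemma~\ref{lem:veryshort} together with elementary properties of direct sums of cones.

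First I would unwind the definition of $\psi_1$ in \eqref{eq:psi1}. Writing $A \in \mathbb{R}^{q \times p}$ in terms of its rows as $A = [a_1^\tp, \dots, a_q^\tp]^\tp$ with $a_j \in \mathbb{R}^p$, and $b = (b_1, \dots, b_q) \in \mathbb{R}^q$, the $(i,j)$ entry of $\psi_1(A, b)$ is $\sigma_{\max}(a_j^\tp s_i + b_j)$, because the affine map $\alpha_1(x) = Ax + b$ computes its $j$th output coordinate from $a_j$ and $b_j$ alone, and $\sigma_{\max}$ is applied coordinatewise. Hence the $j$th column of $\psi_1(A, b)$ equals $\sigma_{\max}\bigl([a_j^\tp s_1 + b_j, \dots, a_j^\tp s_n + b_j]^\tp\bigr)$, which is exactly an element of $\C_{\max}(S)$ by Definition~\ref{def:ReLUcone}. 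This gives the inclusion $\subseteq$ in \eqref{eq:cmaxq}.

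For the reverse inclusion I would exploit that the data determining different columns are disjoint and freely choosable: given any $v_1, \dots, v_q \in \C_{\max}(S)$, choose $a_j \in \mathbb{R}^p$ and $b_j \in \mathbb{R}$ with $v_j = \sigma_{\max}\bigl([a_j^\tp s_1 + b_j, \dots, a_j^\tp s_n + b_j]^\tp\bigr)$, then assemble $A = [a_1^\tp, \dots, a_q^\tp]^\tp$ and $b = (b_1, \dots, b_q)$ to obtain $\psi_1(A, b) = [v_1, \dots, v_q]$. This establishes \eqref{eq:cmaxq}, which by the convention \eqref{eq:ds} says precisely that $\psi_1(\Theta)$ is the direct sum of $q$ copies of $\C_{\max}(S)$.

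Finally I would read off the structural claims from this product description. Closedness and pointedness of a finite direct sum of cones follow blockwise from the corresponding properties of each factor $\C_{\max}(S)$ supplied by Lemma~\ref{lem:veryshort}(i), and the cone property is immediate since positive scaling acts columnwise. For the dimension, the span of the direct sum is the direct sum of the spans, so $\dim \psi_1(\Theta) = q \cdot \dim \C_{\max}(S) = q \cdot \rk[S, \mathbbm{1}]$, again by Lemma~\ref{lem:veryshort}(i). I do not anticipate a genuine obstacle; the only point requiring care is the bookkeeping verifying that the $j$th column of $\psi_1(A,b)$ is governed solely by the $j$th row of $A$ and the $j$th entry of $b$, which is exactly what makes the image split as an honest product rather than some more entangled subset of $\mathbb{R}^{n \times q}$.
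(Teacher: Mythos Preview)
Your proposal is correct and is essentially a fully fleshed-out version of the paper's own proof, which simply observes that each row of the affine map $\alpha_1$ (i.e., each pair $(a_j,b_j)$) gives an independent instance of the map in Lemma~\ref{lem:veryshort}, and then invokes that lemma. Your explicit verification of both inclusions in \eqref{eq:cmaxq} and the blockwise deduction of closedness, pointedness, and dimension are exactly the details the paper leaves to the reader.
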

\begin{proof}
Each row of the matrix $\alpha_1$ can be identified with the affine map defined in Lemma~\ref{lem:veryshort}. Then the conclusion follows by Lemma~\ref{lem:veryshort}. \qed
\end{proof}
Given Corollary~\ref{cor:product}, one might perhaps think that a two-layer neural network
\[
\mathbb{R}^{d_1}\xrightarrow{\alpha_1}\mathbb{R}^{d_2}\xrightarrow{\sigma_{\max}}\mathbb{R}^{d_2}\xrightarrow{\alpha_2}\mathbb{R}^{d_3}
\]
would also have a closed image of weights $\psi_2(\Theta)$. This turns out to be false. We will show that the image $\psi_2(\Theta)$ may not  be closed. 

As a side remark, note that Definition~\ref{def:ReLUcone} and Lemma~\ref{lem:veryshort} are peculiar to the ReLU activation. For smooth activations like $\sigma_{\exp}$ and $\sigma_{\tanh}$, the image of weights  $\psi_k(\Theta)$ is almost never a cone and Lemma~\ref{lem:veryshort} does not hold in multiple ways.

\section{Ill-posedness of best $k$-layer neural network approximation}\label{sec:shallow}

The $k = 2$ case is the simplest and yet already nontrivial in that it has the universal approximation property, as we mentioned earlier.

\begin{theorem}[Ill-posedness of neural network approximation I]\label{thm:nonclosed}
The best two-layer neural network approximation problem
\begin{equation}\label{eq:approx3}
\inf_{\theta \in \Theta}\; \lVert T - \psi_2(\theta) \rVert_F,
\end{equation}
is ill-posed, i.e., the infimum in \eqref{eq:approx3} cannot be attained in general.
\end{theorem}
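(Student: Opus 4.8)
The plan is to prove ill-posedness by exhibiting an explicit response matrix $T$ for which the infimum in \eqref{eq:approx3} is not attained. The most economical route is to validate the concrete example already given at the end of Section~\ref{sec:intro}, with $n=6$, $p=q=2$, and the stated sample and response matrices $S,T$. The strategy is to show two things in tandem: first, that there exists a sequence of weights $\theta^{(j)}=(A^{(j)},b^{(j)},B^{(j)},c^{(j)})$ along which the loss converges to its infimum $L^*$; and second, that $L^*$ cannot be achieved by any finite $\theta$. The telltale sign, as the introduction already hints, is that any minimizing sequence must have $\lVert \theta^{(j)}\rVert \to \infty$.

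First I would identify the limiting behavior that the minimizing sequence is approximating. The key geometric idea is that as the entries of $A$ and $b$ blow up, the ReLU projection $\sigma_{\max}(A s_i + b)$ behaves, in the limit, like a function that is \emph{piecewise linear with a kink} located arbitrarily sharply between two consecutive sample points on the line $\{(x,0):x\in\mathbb{R}\}$. Concretely, the responses $t_1,\dots,t_5$ corresponding to $s_1,\dots,s_5$ are $2,1,0,-2,-4$ in the first coordinate; these are \emph{not} the values of any single affine function of $x$, but they \emph{are} the values of a continuous piecewise-linear function with a single breakpoint between $x=0$ and $x=1$ (slope $-1$ to the left, slope $-2$ to the right). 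A two-layer ReLU network with one active hidden neuron realizes exactly such a continuous piecewise-linear map, but the breakpoint it produces is always at a sample point or strictly between them only in a limiting sense — to place a kink that interpolates all five points with zero error while keeping the sixth data pair consistent, one is forced to push the weights to infinity so that the ``transition region'' of the ReLU shrinks to a point. I would therefore construct $\theta^{(j)}$ explicitly so that $\psi_2(\theta^{(j)}) \to T^*$ for some limit matrix $T^*$ achieving loss $L^*$, and verify by direct substitution that $\psi_2(\theta^{(j)})$ fits the five collinear points with error tending to zero while handling $(s_6,t_6)$.

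Next I would prove non-attainment. The cleanest argument is to show that the limit matrix $T^*$ (equivalently, the exact piecewise-linear interpolant evaluated at the samples) lies in the closure $\overline{\psi_2(\Theta)}$ but not in $\psi_2(\Theta)$ itself; that is, $\psi_2(\Theta)$ fails to be closed precisely at this point. For this I would argue that any genuine $\theta$ producing a continuous piecewise-linear response with a kink strictly interior to the segment $[s_3,s_4]$ must smear the kink over a nondegenerate interval (since a finite ReLU has a finite-width transition), forcing a strictly positive error on at least one of the five points, hence loss strictly greater than $L^*$. Simultaneously the sixth data point $s_6=(1,1)^\tp$, which lies off the line, is what pins down the geometry and rules out the degenerate escape of making $B$ act only in the second coordinate. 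The main obstacle, and the step requiring genuine care, is this lower bound: one must show that \emph{no} finite choice of $(A,b,B,c)$ — including those that exploit the two-dimensional hidden layer, sign changes, or inactive neurons — can reach $L^*$. I expect to handle this by a case analysis on which orthant each $A s_i + b$ lands in (i.e., the activation pattern), using that on each fixed activation pattern $\psi_2$ is affine in the remaining parameters, so the loss is a convex quadratic whose exact minimum over that stratum can be computed and shown to exceed $L^*$; since there are finitely many activation patterns, the overall infimum over $\Theta$ is strictly above $L^*$ while the closure attains it, giving non-attainment. This stratification-by-activation-pattern argument is the crux and the place where the full data values $t_1,\dots,t_6$ must be used quantitatively.
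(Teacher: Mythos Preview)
Your high-level plan --- show that the specific $T$ lies in $\overline{\psi_2(\Theta)}\setminus\psi_2(\Theta)$, so the infimum is $0$ but unattained --- is exactly the paper's strategy. But both your geometric diagnosis and your proposed technical tool contain genuine errors.

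First, the diagnosis. The first coordinates of $t_1,\dots,t_5$ are $2,1,0,-2,-4$ at $x=-2,-1,0,1,2$, so the kink of the interpolating piecewise-linear function sits at $x=0$, not strictly between $x=0$ and $x=1$. More importantly, ReLU has no ``transition region'' to shrink: $\max(0,ax+b)$ already has a perfectly sharp corner at $x=-b/a$ for any finite $a\ne 0$. In fact the paper's explicit sequence $\theta_k$ matches \emph{all} first coordinates of $t_1,\dots,t_5$ exactly for every finite $k$; what fails to close up is the \emph{second} coordinate on rows $4$ and $5$, in tension with the requirement $s_6\mapsto t_6$. The obstruction is not ``ReLU cannot place a sharp kink''; it is a rigidity forced by the sixth, off-line data point.

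Second, the tool. On a fixed activation pattern one has $\sigma_{\max}(As_i+b)=P_i(As_i+b)$ for a diagonal $0$/$1$ matrix $P_i$, so $\nu_\theta(s_i)=BP_i(As_i+b)+c$ is \emph{bilinear} in $(A,b)$ and $(B,c)$. Hence the loss restricted to a stratum is a degree-four polynomial in $\theta$, not a convex quadratic; your plan to ``compute its exact minimum'' on each of the (many) strata and compare to $L^*$ does not go through as stated.

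The paper's non-attainment argument avoids all of this. Since $t_1,t_3,t_6$ are affinely independent, any exact realization forces the outer affine map $\alpha_2$ to be an isomorphism; pulling back by $\alpha_2^{-1}$, the five points $\sigma_{\max}(\alpha_1(s_i))$, $i=1,\dots,5$, must be collinear with successive distance ratios $1{:}1{:}2{:}2$. But $\alpha_1(s_1),\dots,\alpha_1(s_5)$ are collinear and \emph{equidistant}, and a short quadrant-by-quadrant check shows that $\sigma_{\max}$ applied to such a configuration can never produce collinear images with ratios $1{:}1{:}2{:}2$. That is the entire argument --- no stratified optimization is needed. I would redo the second half along these lines.
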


\begin{proof}
We will construct an explicit two-layer ReLU-activated network whose image of weights is not closed.
Let $d_1 = d_2 = d_3 = 2$. For the  two-layer ReLU-activated network
\[
\mathbb{R}^2\xrightarrow{\alpha_1}\mathbb{R}^2\xrightarrow{\sigma_{\max}}\mathbb{R}^2\xrightarrow{\alpha_2}\mathbb{R}^2,
\]
the weights take the form
\[
\theta = (A_1, b_1, A_2, b_2) \in \mathbb{R}^{2 \times 2} \times \mathbb{R}^2 \times \mathbb{R}^{2 \times 2} \times \mathbb{R}^2  = \Theta \cong \mathbb{R}^{12},
\]
i.e., $m = 12$. Consider a sample $S $ of size $n = 6$  given by
\[
s_i=\begin{bmatrix}i-3 \\ 0\end{bmatrix}, \quad i=1,\dots,5,\quad s_6= \begin{bmatrix}1 \\1\end{bmatrix},
\]
that is
\[
S=\begin{bmatrix*}[r] -2 & 0 \\ -1 & 0 \\ 0 & 0 \\ 1 & 0 \\ 2 & 0 \\ 1 & 1\end{bmatrix*}  \in \mathbb{R}^{6 \times 2}.
\]
Thus the weight map $\psi_2 : \Theta \to \mathbb{R}^{6 \times 2}$, or, more precisely,
\[
\psi_2 :   \mathbb{R}^{2 \times 2} \times \mathbb{R}^2 \times \mathbb{R}^{2 \times 2} \times \mathbb{R}^2  \to \mathbb{R}^{6 \times 2},
\]
is given by
\[
\psi_2(\theta) = \begin{bmatrix}\nu_\theta(s_1)^\tp \\ \vdots \\ \nu_\theta(s_6)^\tp\end{bmatrix}
= \begin{bmatrix}\bigl( A_2 \max(A_1 s_1 + b_1, 0 ) + b_2\bigr)^\tp \\ \vdots \\ \bigl(A_2 \max(A_1 s_6 + b_1, 0 ) + b_2\bigr)^\tp\end{bmatrix} \in \mathbb{R}^{6 \times 2}.
\]
We claim that the image of weights $\psi_2(\Theta)$ is not closed --- the point
\begin{equation}\label{eq:t}
T = \begin{bmatrix*}[r] 2 & 0 \\ 1 & 0 \\ 0 & 0 \\ -2 & 0 \\ -4 & 0 \\ 0 & 1\end{bmatrix*}  \in \mathbb{R}^{6 \times 2}
\end{equation}
is in the closure of $\psi_2(\Theta)$  but not in $\psi_2(\Theta)$. Therefore for this choice of $T$, the infimum in \eqref{eq:approx3} is zero but is never attainable by any point in $\psi_2(\Theta)$.

We will first prove that $T$ is in the closure of $\psi_2(\Theta)$: Consider a sequence of affine transformations $\alpha_1^{(k)}$, $k =1,2,\dots,$ defined by
\[
\alpha_1^{(k)}(s_i)=(i-3)\begin{bmatrix}-1 \\ 1 \end{bmatrix}, \quad i=1,\dots,5, \quad \alpha_1^{(k)}(s_6)=\begin{bmatrix} 2k \\ k \end{bmatrix}, 
\]
and set
\[
\alpha_2^{(k)}\biggl(\begin{bmatrix} x\\ y \end{bmatrix}\biggr)=\begin{bmatrix} x-2y \\ \frac{1}{k}y \end{bmatrix}.
\]
The sequence of two-layer neural networks,
\[
\nu^{(k)} = \alpha_2^{(k)} \circ \sigma_{\max} \circ \alpha_1^{(k)}, \quad k \in \mathbb{N},
\]
have weights given by
\begin{equation}\label{eq:weightsdiverge}
\theta_k = \biggl( \begin{bmatrix} -1 & 2k + 1 \\ 1 & k - 1 \end{bmatrix}, \begin{bmatrix} 0 \\ 0 \end{bmatrix}, \begin{bmatrix} 1 & -2 \\ 0 & \frac{1}{k} \end{bmatrix}, \begin{bmatrix}0 \\ 0\end{bmatrix} \biggr) 
\end{equation}
and that
\[
\lim_{k\to \infty} \psi_2( \theta_k) = T.
\]
This shows that $T$ in \eqref{eq:t} is indeed in the closure of $\psi_2(\Theta)$.

We next show by contradiction that $T \notin \psi_2(\Theta)$. Write $T = [t_1^\tp,\dots,t_6^\tp]^\tp \in \mathbb{R}^{6 \times 2}$ where $t_1,\dots,t_6 \in \mathbb{R}^2$ are as in \eqref{eq:t}. Suppose $T \in \psi_2(\Theta)$. Then there exist some affine maps $\beta_1,\beta_2:\mathbb{R}^2\to\mathbb{R}^2$ such that
\[
t_i =  \beta_2 \circ \sigma_{\max} \circ \beta_1(s_i), \quad i =1,\dots,6.
\]
As $t_1, t_3, t_6$ are affinely independent, $\beta_2$ has to be an affine isomorphism. Hence the five points
\begin{equation}\label{eq:five2}
\sigma_{\max}(\beta_1(s_1)),\dots,\sigma_{\max}(\beta_1(s_5))
\end{equation}
have to lie on a line in $\mathbb{R}^2$. Also, note that
\begin{equation}\label{eq:five1}
\beta_1(s_1),\dots,\beta_1(s_5)
\end{equation}
lie on a (different) line in $\mathbb{R}^2$ since $\beta_1$ is an affine homomorphism. The five points in \eqref{eq:five1} have to be in the same quadrant, otherwise the points in \eqref{eq:five2} could not lie on a line or have successive distances $\delta,\delta,2\delta,2\delta$  for some $\delta > 0$. Note that $\sigma_{\max}:\mathbb{R}^2 \to \mathbb{R}^2$ is identity in the first quadrant, projection to the $y$-axis in the second, projection to the point $(0,0)$ in the third, and projection to the $x$-axis in the fourth. So $\sigma_{\max}$ takes points with equal successive distances in the first, second,  fourth quadrant to points with equal successive distances in the same quadrant; and  it takes all points in the third quadrant to the origin. Hence $\sigma_{\max}$ cannot take the five colinear points in \eqref{eq:five1}, with equal successive distances, to the five colinear points in \eqref{eq:five2}, with successive distances $\delta,\delta,2\delta,2\delta$. This yields the required contradiction.  \qed
\end{proof}

We would like to emphasize that the example constructed in the proof of Theorem~\ref{thm:nonclosed},  which is about the nonclosedness of the image of weights within a finite-dimensional space of response matrices, differs from examples in \cite{GP}, which are about the nonclosedness of the class of neural network within an infinite-dimensional space of target functions. Another difference is that here we have considered the ReLU activation $\sigma_{\max}$  as opposed to the hyperbolic tangent activation $\sigma_{\exp}$ in \cite{GP}.

As we pointed out at the end of Section~\ref{sec:intro} and as the reader might also have observed in the proof of Theorem~\ref{thm:nonclosed}, the sequence of weights $\theta_k$ in \eqref{eq:weightsdiverge} contain entries that become unbounded as $k \to \infty$. This is not peculiar to the sequence we chose in \eqref{eq:weightsdiverge}; by the same discussion in \cite[Section~4.3]{dSL}, this will always be the case:
\begin{proposition}
If the infimum in \eqref{eq:approx3} is not attainable, then any sequence of weights $\theta_k \in \Theta$ with
\[
\lim_{k\to\infty}  \lVert T - \psi_2(\theta_k) \rVert_F = \inf_{\theta \in \Theta} \; \lVert T - \psi_2(\theta) \rVert_F
\]
must be unbounded, i.e.,
\[
\limsup_{k\to\infty}  \; \lVert \theta_k \rVert = \infty.
\]
\end{proposition}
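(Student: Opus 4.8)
The plan is to prove the contrapositive by a routine compactness argument, exactly paralleling the reasoning in \cite[Section~4.3]{dSL}. Suppose, for contradiction, that the infimum in \eqref{eq:approx3} is not attained, yet there exists a \emph{bounded} minimizing sequence $\theta_k \in \Theta$, i.e., one with $\lim_{k\to\infty} \lVert T - \psi_2(\theta_k) \rVert_F = \inf_{\theta \in \Theta} \lVert T - \psi_2(\theta) \rVert_F$ and $\sup_k \lVert \theta_k \rVert < \infty$. I will show that such a sequence forces the infimum to be attained, yielding the desired contradiction and hence the stated conclusion that every minimizing sequence must be unbounded.

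First I would invoke the Bolzano--Weierstrass theorem: since $\Theta \cong \mathbb{R}^m$ and $(\theta_k)$ is bounded, it lies in a compact subset of $\mathbb{R}^m$ and therefore admits a convergent subsequence $\theta_{k_j} \to \theta^* \in \Theta$. The crucial observation is that the weight map $\psi_2$ is continuous: it is obtained by composing the affine maps $\alpha_1,\alpha_2$ (which depend continuously on their matrix and vector parameters, hence on $\theta$, and are evaluated at the fixed sample points $s_1,\dots,s_n$) with the ReLU projection $\sigma_{\max}$, which is continuous on $\mathbb{R}^{d_2}$. The same holds verbatim for $\sigma_{\exp}$ and $\sigma_{\tanh}$, so the statement is activation-independent. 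Consequently $\psi_2(\theta_{k_j}) \to \psi_2(\theta^*)$ in $\mathbb{R}^{n \times q}$, and by continuity of the Frobenius norm,
\[
\lVert T - \psi_2(\theta^*) \rVert_F = \lim_{j \to \infty} \lVert T - \psi_2(\theta_{k_j}) \rVert_F = \inf_{\theta \in \Theta} \; \lVert T - \psi_2(\theta) \rVert_F,
\]
where the last equality holds because $(\theta_{k_j})$ is a subsequence of a minimizing sequence. Thus $\theta^*$ attains the infimum, contradicting our assumption.

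The only step carrying content is the continuity of $\psi_2$; everything else is standard. Since $\psi_2$ is manifestly continuous for each activation under consideration, there is no genuine obstacle. Conceptually, the proposition records that a minimizing sequence over the noncompact domain $\Theta \cong \mathbb{R}^m$ can fail to produce a minimizer only by escaping to infinity: had it stayed bounded, compactness together with continuity would manufacture a bona fide minimizer. Equivalently, the nonattainment of the infimum is precisely a manifestation of the nonclosedness of the image of weights $\psi_2(\Theta)$ exhibited in Theorem~\ref{thm:nonclosed} --- a point of $\overline{\psi_2(\Theta)} \setminus \psi_2(\Theta)$ can be approached only along sequences whose preimages in $\Theta$ diverge.
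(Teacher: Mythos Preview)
Your argument is correct and is precisely the compactness-plus-continuity reasoning the paper has in mind; the paper does not spell out a proof but simply points to \cite[Section~4.3]{dSL}, which is exactly the Bolzano--Weierstrass argument you give. There is nothing to add.
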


\section{Geometry of two-layer neural networks}\label{sec:geom}

There is a deeper geometrical explanation behind Theorem~\ref{thm:nonclosed}. Let $d \in \mathbb{N}$. The \emph{join locus} of $X_1,\dots,X_r \subseteq \mathbb{R}^d$ is the set
\begin{equation}\label{eq:joinvar}
\join(X_1, \dots, X_r) \coloneqq \{ x_1 + \dots + x_r \in  \mathbb{R}^d : x_i \in X_i, \; i=1,\dots, r\}.
\end{equation}
A special case is when $X_1 = \dots = X_r =X$ and in which case the join locus is called the $r$th \emph{secant locus}
\[
\Sigma^{\circ}_r(X)  = \{x_1 + \dots + x_r \in  \mathbb{R}^d  : x_1,\dots,x_r \in X\}.
\]
An example of a join locus is the set of ``sparse-plus-low-rank'' matrices \cite[Section~8.1]{QML}; an example of a $r$th secant locus is the set of rank-$r$ tensors \cite[Section~7]{QML}.

From a geometrical perspective, we will next show that for $k=2$ and $q=1$ the set $\psi_2(\Theta)$  has the structure of a join locus. Join loci are known in general to be nonclosed \cite{Zak}. For this reason, the ill-posedness of the best $k$-layer neural network problem is not unlike that of the best rank-$r$ approximation problem for tensors, which is  a consequence of the nonclosedness of the secant loci of the Segre variety \cite{dSL}.

We shall begin with the case of a two-layer neural network with one-dimensional output, i.e., $q =1$. In this case,  $\psi_2(\Theta) \subseteq \mathbb{R}^n$ and  we can describe its geometry very precisely. The more general case where $q > 1$ will be in Theorem~\ref{thm:abc}.
\begin{theorem}[Geometry of two-layer neural network I]\label{thm:ab1}
Consider the two-layer network with $p$-dimensional inputs and $d$ neurons in the hidden layer:
\begin{equation}\label{eq:2layernet}
\mathbb{R}^{p} \xrightarrow{\alpha_1}\mathbb{R}^{d}\xrightarrow{\sigma_{\max}}\mathbb{R}^{d}\xrightarrow{\alpha_2} \mathbb{R}.
\end{equation}
The image of weights is given by
\[
\psi_2(\Theta) = \join\bigl(\Sigma_{d}^{\circ}(\C_{\max}(S)) , \spn \{ \mathbbm{1} \}\bigr)
\]
where
\[
\Sigma_{d}^{\circ}\bigl(\C_{\max}(S)\bigr) \coloneqq \bigcup_{y_1,\dots,y_{d}\in \C_{\max}(S)} \spn\{ y_1,\dots, y_{d}\} \subseteq \mathbb{R}^n
\]
is the $d$th secant locus of the ReLU cone and $\spn \{ \mathbbm{1} \}$ is the one-dimensional subspace spanned by $\mathbbm{1}\in\mathbb{R}^n$.
\end{theorem}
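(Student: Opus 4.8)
The plan is to prove the set equality by unwinding the definition of $\psi_2$ coordinate by coordinate and then checking the two inclusions directly. First I would fix notation for the weights: write $\alpha_1(x) = A_1 x + b_1$ with $A_1 = [a_1^\tp,\dots,a_d^\tp]^\tp \in \mathbb{R}^{d \times p}$ having rows $a_j^\tp$ and $b_1 = (\beta_1,\dots,\beta_d)^\tp \in \mathbb{R}^d$, and write $\alpha_2(y) = A_2 y + b_2$ with $A_2 = (c_1,\dots,c_d) \in \mathbb{R}^{1 \times d}$ and $b_2 \in \mathbb{R}$. Since $q = 1$ the output is scalar, so the $i$th coordinate of $\psi_2(\theta)$ is
\[
\nu_\theta(s_i) = \sum_{j=1}^d c_j\, \sigma_{\max}(a_j^\tp s_i + \beta_j) + b_2 .
\]
Stacking over $i = 1,\dots,n$ and setting
\[
y_j \coloneqq \sigma_{\max}\!\left( \begin{bmatrix} a_j^\tp s_1 + \beta_j \\ \vdots \\ a_j^\tp s_n + \beta_j \end{bmatrix} \right) \in \mathbb{R}^n,
\]
yields the key identity $\psi_2(\theta) = \sum_{j=1}^d c_j y_j + b_2 \mathbbm{1}$. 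This is the heart of the argument; the rest is reading off the geometry.

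For the forward inclusion $\psi_2(\Theta) \subseteq \join\bigl(\Sigma_d^{\circ}(\C_{\max}(S)), \spn\{\mathbbm{1}\}\bigr)$, I observe that by Definition~\ref{def:ReLUcone} each $y_j$ lies in $\C_{\max}(S)$, so the combination $\sum_j c_j y_j$ lies in $\spn\{y_1,\dots,y_d\}$ and hence in $\Sigma_d^{\circ}(\C_{\max}(S))$, while $b_2 \mathbbm{1} \in \spn\{\mathbbm{1}\}$; their sum lies in the join locus by the definition \eqref{eq:joinvar}. For the reverse inclusion I start from an arbitrary element $z + \lambda \mathbbm{1}$ of the join, where $z \in \Sigma_d^{\circ}(\C_{\max}(S))$ and $\lambda \in \mathbb{R}$. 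By definition $z = \sum_{j=1}^d c_j y_j$ for some $c_j \in \mathbb{R}$ and $y_j \in \C_{\max}(S)$, and each $y_j$ is realized by some $(a_j,\beta_j) \in \mathbb{R}^p \times \mathbb{R}$ through the ReLU-cone formula. Assembling $\theta$ from the rows $a_j^\tp$ of $A_1$, the entries $\beta_j$ of $b_1$, the output row $A_2 = (c_1,\dots,c_d)$, and $b_2 = \lambda$, the identity above gives $\psi_2(\theta) = z + \lambda \mathbbm{1}$, completing the equality of sets.

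The proof has no deep obstacle — it is essentially careful bookkeeping — but one point must be handled with care and it dictates the precise form of the statement: the output weights $c_j = (A_2)_j$ range over all of $\mathbb{R}$, including negative values, even though every $y_j \in \C_{\max}(S)$ has nonnegative entries. This sign freedom is exactly why the relevant secant object is the span-based secant locus $\bigcup_{y_1,\dots,y_d} \spn\{y_1,\dots,y_d\}$ used in the theorem, rather than the Minkowski-sum secant locus $\{y_1 + \dots + y_d\}$ of the general definition; the two genuinely differ for the ReLU cone, since the latter is confined to the nonnegative orthant while the former is not. I would also record that the $d$ neurons can be chosen fully independently, since the rows of $A_1$ and the entries of $b_1$ are unconstrained, so each $y_j$ sweeps all of $\C_{\max}(S)$ with no coupling across $j$; it is this independence that lets the realization in the reverse inclusion go through without any compatibility conditions.
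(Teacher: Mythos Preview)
Your proposal is correct and follows essentially the same approach as the paper: both arguments unwind the coordinates of $\psi_2(\theta)$ to obtain the identity $\psi_2(\theta) = \sum_{j=1}^d c_j y_j + b_2\mathbbm{1}$ with $y_j \in \C_{\max}(S)$, and then read off the join description. Your version is slightly more explicit in separating the two inclusions and in flagging why the span-based secant locus (rather than the Minkowski-sum version) is the right object, but the underlying argument is the same.
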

\begin{proof}
Let $\alpha_1 (z) = A_1 z + b$, where
\[
A_1 = \begin{bmatrix}
a_1^\tp\\
\vdots \\
a_{d}^\tp
\end{bmatrix} \in \mathbb{R}^{d \times p} \quad \text{and} \quad b = \begin{bmatrix}
b_1 \\
\vdots \\
b_{d}
\end{bmatrix} \in \mathbb{R}^{d}.
\]
Let $\alpha_2 (z) = A_2^\tp z + \lambda$, where $A_2^\tp = (c_1, \dots, c_{d}) \in \mathbb{R}^{d}$. Then $x = (x_1, \dots, x_n)^\tp \in \psi_2(\Theta)$ if and only if
\begin{equation}\label{eq:seccor}
\left\{ \begin{array}{ll}
         x_1 & = c_1 \sigma (s_1^\tp a_1 + b_1) + \cdots + c_{d} \sigma (s_1^\tp a_{d} + b_{d}) + \lambda, \\
      \;\vdots &    \quad \qquad \qquad \qquad \qquad \vdots \\
         x_n & = c_1 \sigma (s_n^\tp a_1 + b_1) + \cdots + c_{d} \sigma (s_n^\tp a_{d} + b_{d}) + \lambda .\end{array} \right.
\end{equation}
For $i = 1, \dots, d$, define the vector $y_i = (\sigma (s_1^\tp a_i + b_i), \dots, \sigma (s_n^\tp a_i + b_i))^\tp \in \mathbb{R}^n$, which belongs to $\C_{\max}(S)$ by Corollary~\ref{cor:product}. Thus, \eqref{eq:seccor} is equivalent to
\begin{equation}\label{eq:sec}
x = c_1y_1 + \dots + c_{d}y_{d} + \lambda \mathbbm{1}
\end{equation}
for some $c_1, \dots, c_{d} \in \mathbb{R}$. By definition of secant locus, \eqref{eq:sec} is equivalent to the statement
\[
x \in \Sigma_{d}^{\circ}\bigl(\C_{\max}(S)\bigr) + \lambda \mathbbm{1},
\]
which completes the proof. \qed
\end{proof}

From a practical point of view, we are most interested in basic \emph{topological} issues like whether the image of weights $\psi_k(\Theta)$ is closed or not, since this affects the solvability of \eqref{eq:approx3}. However, the \emph{geometrical} description of $\psi_2(\Theta)$ in  Theorem~\ref{thm:ab1} will allow us to deduce bounds on its dimension. Note that the dimension of the space of weights $\Theta$ as in \eqref{eq:weights} is just $m$ as in \eqref{eq:dim} but this is not the true dimension of the neural network, which should instead be that of the image of weights $\psi_k(\Theta)$.

In general, even for $k =2$, it will be difficult to obtain the exact dimension of $\psi_2(\Theta)$  for an arbitrary two-layer network \eqref{eq:2layernet}. In the next corollary, we deduce from Theorem~\ref{thm:ab1} an  upper bound dependent on  the sample $S \in \mathbb{R}^{n \times p}$ and another independent of it.
\begin{corollary}\label{cor:dimupbound}
For the two-layer network \eqref{eq:2layernet}, we have
\[
\dim \psi_2(\Theta) \le d  (\rk [S, \mathbbm{1}]) + 1,
\]
and in particular,
\[
\dim \psi_2(\Theta) \le \min \big( d (\min(p, n)+1)+1, p n \bigr).
\]
\end{corollary}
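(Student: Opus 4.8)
The plan is to bound $\dim \psi_2(\Theta)$ using the join locus description from Theorem~\ref{thm:ab1}, namely $\psi_2(\Theta) = \join\bigl(\Sigma_d^\circ(\C_{\max}(S)), \spn\{\mathbbm{1}\}\bigr)$. The dimension of a join locus is at most the sum of the dimensions of its constituent pieces (plus corrections, but here an upper bound suffices since every element is a sum of one element from each piece, so the dimension is bounded by the sum of dimensions). Thus I would first establish
\[
\dim \psi_2(\Theta) \le \dim \Sigma_d^\circ(\C_{\max}(S)) + 1,
\]
the $+1$ coming from $\spn\{\mathbbm{1}\}$, which is one-dimensional.

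**Next I would** bound the dimension of the secant locus $\Sigma_d^\circ(\C_{\max}(S))$. A generic point of this locus is a linear combination $c_1 y_1 + \dots + c_d y_d$ with each $y_i \in \C_{\max}(S)$. The map sending the parameters $(y_1,\dots,y_d,c_1,\dots,c_d)$ to this sum has image of dimension at most $d \cdot \dim \C_{\max}(S)$, because each $y_i$ ranges over a cone of dimension $\dim \C_{\max}(S)$ and the scalars $c_i$ are absorbed into the cone (scaling preserves membership in the cone, so the scalar directions do not add to the count beyond the cone's own dimension). By Lemma~\ref{lem:veryshort}(i), $\dim \C_{\max}(S) = \rk[S,\mathbbm{1}]$. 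Combining these gives $\dim \Sigma_d^\circ(\C_{\max}(S)) \le d\,\rk[S,\mathbbm{1}]$, hence the first claimed bound
\[
\dim \psi_2(\Theta) \le d\,\rk[S,\mathbbm{1}] + 1.
\]

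**For the $S$-independent bound,** I would control $\rk[S,\mathbbm{1}]$ from above. Since $[S,\mathbbm{1}] \in \mathbb{R}^{n\times(p+1)}$, its rank is at most $\min(n, p+1)$; but because the $\mathbbm{1}$ column can only add one to the rank of $S$, we also have $\rk[S,\mathbbm{1}] \le \rk S + 1 \le \min(p,n)+1$ when $p \le n$, and more uniformly $\rk[S,\mathbbm{1}] \le \min(p,n)+1$. Substituting yields $\dim\psi_2(\Theta) \le d(\min(p,n)+1)+1$. The second term $pn$ in the minimum is the trivial ambient bound: $\psi_2(\Theta)$ lives in $\mathbb{R}^{n\times q} = \mathbb{R}^{n}$ for $q=1$, but more relevantly the whole construction factors through the affine maps into $\mathbb{R}^{d\times p}$-type data, so the effective dimension cannot exceed $pn$; I would verify this by noting that the composite map defining $\psi_2$ ultimately depends on $S$ through the products $s_i^\tp a_j$, which live in an $n\times p$-determined space. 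Taking the minimum of the two bounds completes the corollary.

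**The main obstacle** I anticipate is justifying the join-locus dimension inequality rigorously rather than heuristically: the set-theoretic join need not be a smooth variety, so ``dimension'' must be interpreted (e.g.\ as the maximal local dimension, or via the dimension of the Zariski closure). The clean way around this is to exhibit $\psi_2(\Theta)$ as the image of an explicit polynomial (or piecewise-linear) parametrization from a space of dimension $d(p+1) + d + 1$ of parameters, and invoke the standard fact that the image of such a map has dimension bounded by the dimension of the domain—but here one must be careful that the ReLU nonlinearity is only piecewise linear, so I would argue quadrant by quadrant, on each of which the parametrization is genuinely polynomial, and take the maximum over the finitely many quadrants. This reduces the dimension count to a rank computation that I expect to go through cleanly.
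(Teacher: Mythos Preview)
Your approach to the first bound is correct and is exactly what the paper intends: the corollary is stated immediately after Theorem~\ref{thm:ab1} with no separate proof, so the implicit argument is precisely the one you give---bound the dimension of the join by the sum of the dimensions of its pieces, bound the $d$th secant locus by $d$ times the cone dimension, and invoke Lemma~\ref{lem:veryshort}(i) for $\dim \C_{\max}(S) = \rk[S,\mathbbm{1}]$. Your remark about handling the piecewise-linear parametrization orthant by orthant is the right way to make the dimension inequality rigorous.

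Your handling of the $pn$ term, however, is muddled. Since $q=1$ for the network~\eqref{eq:2layernet}, the image $\psi_2(\Theta)$ sits inside $\mathbb{R}^n$, so the trivial ambient bound is simply $n$. Your sentence about ``the construction factoring through $\mathbb{R}^{d\times p}$-type data so the effective dimension cannot exceed $pn$'' is not a valid argument and in any case yields something weaker than the ambient bound whenever $p\ge 1$. Either the $pn$ in the displayed inequality is a slip for $n$, or it follows trivially from $n\le pn$; in your write-up you should state the clean ambient bound $\dim\psi_2(\Theta)\le n$ and drop the vague factoring justification.
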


When $n$ is sufficiently large and the observations $s_1, \dots, s_n$ are sufficiently general,\footnote{Here and in Lemma~\ref{lem:2ind} and Corollary~\ref{cor:gendim2layer}, `general' is used in the sense of algebraic geometry: A property is general if the set of points that does not have it is contained in a Zariski closed subset that is not the whole space.}  we may deduce a more precise value of $\dim \psi_2(\Theta)$. Before describing our results, we introduce several notations.
For any index set $I \subseteq \{1,\dots,n\}$ and sample $S \in \mathbb{R}^{n \times p}$, we write
\begin{align*}
\mathbb{R}_I^n &\coloneqq \{x \in \mathbb{R}^n :  x_i = 0 \text{ if } i \notin I \text{ and } x_j > 0 \text{ if } j \in I\},\\
F_I(S) &\coloneqq \C_{\max} (S)\cap \mathbb{R}_I^n.
\end{align*}
Note that $F_{\varnothing}(S) = \{0\}$, $\mathbbm{1} \in F_{\{1,\dots,n\}}(S)$, and $\C_{\max}(S)$  may be expressed as
\begin{equation}\label{eq:decomp}
\C_{\max}(S) = F_{I_1}(S) \cup \cdots \cup F_{I_\ell }(S),
\end{equation}
for some index sets $I_1,\dots, I_\ell \subseteq \{1,\dots,n\}$ and $\ell \in \mathbb{N}$ minimum.

\begin{lemma}\label{lem:2ind}
Given a general $x \in \mathbb{R}^n$ and any integer $k$, $1\le k \le n$, there is a $k$-element subset $I \subseteq \{1,\dots,n\}$ and a  $\lambda \in \mathbb{R}$ such that
$\sigma(\lambda \mathbbm{1} + x) \in \mathbb{R}_I^n$.
\end{lemma}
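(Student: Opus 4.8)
The plan is to reduce the statement to an elementary threshold argument on the coordinates of $x$. The key observation is that the $i$th coordinate of $\sigma(\lambda \mathbbm{1} + x)$ equals $\max(0, \lambda + x_i)$, which is strictly positive precisely when $x_i > -\lambda$ and equal to zero precisely when $x_i \le -\lambda$. Hence for any $\lambda$, writing $I(\lambda) \coloneqq \{i : x_i > -\lambda\}$, we automatically have $\sigma(\lambda \mathbbm{1} + x) \in \mathbb{R}_{I(\lambda)}^n$. The problem therefore amounts to choosing $\lambda$ so that $\lvert I(\lambda)\rvert = k$, i.e., so that the threshold $-\lambda$ separates off exactly the $k$ largest coordinates of $x$.

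First I would fix the genericity condition: I take $x$ to lie in the complement of the proper Zariski closed set $\bigcup_{i < j} \{x \in \mathbb{R}^n : x_i = x_j\}$, so that the coordinates $x_1, \dots, x_n$ are pairwise distinct. This is exactly the sense of ``general'' used in the footnote. Some such hypothesis is genuinely needed: for $x = (0,0,1)$ one checks that no threshold isolates exactly $k = 2$ strictly positive coordinates, because ties make the count $\lvert I(\lambda)\rvert$ jump by more than one.

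Next, having distinct coordinates, I would order them strictly as $x_{(1)} < x_{(2)} < \dots < x_{(n)}$, and for the prescribed $k$ with $1 \le k \le n$ choose any real $\tau$ in the open interval $(x_{(n-k)}, x_{(n-k+1)})$, with the convention $x_{(0)} = -\infty$ in the boundary case $k = n$. Setting $\lambda \coloneqq -\tau$, exactly the $k$ largest coordinates $x_{(n-k+1)}, \dots, x_{(n)}$ satisfy $x_i > -\lambda$, while the remaining $n - k$ coordinates satisfy $x_i < -\lambda$. Thus $I(\lambda)$ is the index set of these $k$ largest coordinates, $\lvert I(\lambda)\rvert = k$, and by the opening observation $\sigma(\lambda \mathbbm{1} + x) \in \mathbb{R}_{I(\lambda)}^n$, as required.

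There is no real obstacle here beyond bookkeeping: the only points deserving care are the identification of the correct genericity hypothesis (distinctness of the coordinates of $x$) and the boundary case $k = n$, where the threshold must simply be taken below $\min_i x_i$. Everything else is a one-line consequence of the coordinatewise formula for the ReLU projection.
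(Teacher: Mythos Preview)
Your proof is correct and follows essentially the same approach as the paper: assume the coordinates of $x$ are distinct (the genericity hypothesis), order them, and choose the threshold strictly between the $(n-k)$th and $(n-k+1)$th order statistics so that exactly $k$ coordinates survive the ReLU. Your write-up is in fact a bit more careful than the paper's, which silently switches the sign of $\lambda$ and writes $u$ for $x$ in its final displayed conclusion.
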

\begin{proof}
Let $x = (x_1, \dots, x_n)^\tp \in \mathbb{R}^n$ be general. Without loss of generality, we may assume its coordinates are in ascending order $x_1 < \cdots < x_n$. For any $k$ with $1\le k \le n$, choose $\lambda$ so that $x_{n - k} < \lambda < x_{n - k + 1}$ where we set $x_0 = - \infty$. Then $\sigma(u - \lambda \mathbbm{1}) \in \mathbb{R}_{\{n-k+1, \dots, n\}}$. \qed
\end{proof}

\begin{lemma}\label{lem:n1ind}
Let $n \ge p + 1$. There is a nonempty open subset of vectors $v_1, \dots, v_{p} \in \mathbb{R}^{n}$ such that for any $p + 1 \le k \le n$, there are a $k$-element subset $I \subseteq \{1,\dots,n\}$ and  $\lambda_1, \dots, \lambda_{p}, \mu \in \mathbb{R}$ where
\[
\sigma(\lambda_1 \mathbbm{1} + v_1), \dots, \sigma(\lambda_{p} \mathbbm{1} + v_{p}), \sigma(\mu \mathbbm{1} + v_1) \in \mathbb{R}_I^n
\]
are linearly independent.
\end{lemma}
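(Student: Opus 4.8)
The plan is to exhibit an explicit open set on which a \emph{single} common support $I$ can be made to serve all $p+1$ vectors at once, for every $k$ simultaneously. The key observation is that if $v_1,\dots,v_p$ all share the same coordinate ordering, then for each $k$ their ``top-$k$'' coordinates form the \emph{same} index set, so a common support is available; the reuse of $v_1$ in the last vector is then handled cheaply by taking $\mu\neq\lambda_1$.

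First I would let $U\subseteq(\mathbb{R}^n)^p$ be the set of tuples $(v_1,\dots,v_p)$ satisfying two open conditions: (i) each $v_i$ has strictly increasing coordinates, $(v_i)_1<\dots<(v_i)_n$; and (ii) for every $k$ with $p+1\le k\le n$, writing $I_k\coloneqq\{n-k+1,\dots,n\}$ and $w|_{I_k}$ for the restriction of $w\in\mathbb{R}^n$ to the coordinates in $I_k$, the $p+1$ vectors $\mathbbm{1}|_{I_k},\,v_1|_{I_k},\dots,v_p|_{I_k}\in\mathbb{R}^k$ are linearly independent. Both conditions are open (finitely many strict inequalities and nonvanishing minors), so $U$ is open. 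To see $U\neq\varnothing$, I would pick reals $0<t_1<\dots<t_n$ and set $(v_i)_j\coloneqq t_j^{\,i}$: condition (i) holds since $t\mapsto t^i$ is increasing on $(0,\infty)$, and for (ii) the matrix $[\,t_j^{\,i}\,]_{j\in I_k,\ 0\le i\le p}$ is a $k\times(p+1)$ Vandermonde matrix with distinct nodes, hence of full column rank $p+1$ because $k\ge p+1$.

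Next, fixing $(v_1,\dots,v_p)\in U$ and $k\in\{p+1,\dots,n\}$, I set $I\coloneqq I_k$. Exactly as in the proof of Lemma~\ref{lem:2ind}, the common ordering lets me choose for each $i$ a shift $\lambda_i$ with $-(v_i)_{n-k+1}<\lambda_i<-(v_i)_{n-k}$ (with the convention $(v_i)_0=-\infty$), so that $\sigma(\lambda_i\mathbbm{1}+v_i)\in\mathbb{R}_I^n$; and I would choose $\mu$ in the nondegenerate open interval $\bigl(-(v_1)_{n-k+1},-(v_1)_{n-k}\bigr)$ with $\mu\neq\lambda_1$, so that $\sigma(\mu\mathbbm{1}+v_1)\in\mathbb{R}_I^n$ as well. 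Thus all $p+1$ vectors lie in $\mathbb{R}_I^n$. For independence, since every vector vanishes off $I$ I work with restrictions to $I$, where $\sigma$ acts as the identity, so the vectors become $v_i|_I+\lambda_i\mathbbm{1}|_I$ for $i=1,\dots,p$ and $v_1|_I+\mu\mathbbm{1}|_I$. A relation $\sum_{i=1}^p a_i\bigl(v_i|_I+\lambda_i\mathbbm{1}|_I\bigr)+a_{p+1}\bigl(v_1|_I+\mu\mathbbm{1}|_I\bigr)=0$ regroups as
\[
(a_1+a_{p+1})\,v_1|_I+\sum_{i=2}^p a_i\,v_i|_I+\Bigl(\textstyle\sum_{i=1}^p a_i\lambda_i+a_{p+1}\mu\Bigr)\mathbbm{1}|_I=0,
\]
and condition (ii) forces $a_1+a_{p+1}=0$, $a_2=\dots=a_p=0$, and $\sum_i a_i\lambda_i+a_{p+1}\mu=0$; the first two relations reduce the last to $a_{p+1}(\mu-\lambda_1)=0$, whence $\mu\neq\lambda_1$ gives $a_{p+1}=0$ and then $a_1=0$.

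The main obstacle, and the reason the statement is phrased for an \emph{open} set rather than a generic one, is precisely the demand that one support $I$ serve all $p+1$ vectors at once: for arbitrary $v_i$ the top-$k$ coordinate sets differ and no common $I$ exists, so forcing a shared coordinate ordering is the crux. The secondary subtlety — that the last vector reuses $v_1$ — is then dispatched by $\mu\neq\lambda_1$ together with the linear independence of $v_1|_I$ from $\mathbbm{1}|_I$ guaranteed by condition (ii). The only genuine verification beyond this is the nonemptiness of $U$, which the Vandermonde point settles uniformly in $k$.
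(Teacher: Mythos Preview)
Your proof is correct and follows essentially the same approach as the paper: restrict to vectors with a common increasing coordinate order so that the top-$k$ support $I_k=\{n-k+1,\dots,n\}$ works for every $v_i$ simultaneously, then shift into $\mathbb{R}_{I_k}^n$ as in Lemma~\ref{lem:2ind}. The paper's proof is terser---it simply invokes ``generality of the $v_i$'s'' for the linear independence step---whereas you make the open condition~(ii) explicit, supply the Vandermonde point to witness nonemptiness, and carry out the independence calculation using $\mu\neq\lambda_1$; this is a genuine improvement in rigor over the paper's argument, not a different route.
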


\begin{proof}
For each $i =1,\dots,p$, we choose general $v_i = (v_{i, 1}, \dots, v_{i, n})^\tp \in \mathbb{R}^n$ so that
\[
v_{i, 1} < \cdots < v_{i, n}.
\]
For any fixed $k$ with $p + 1 \le k  \le n$, by Lemma~\ref{lem:2ind}, we can find $\lambda_1, \dots, \lambda_{p}, \mu \in \mathbb{R}$ such that $\sigma(v_i - \lambda_i \mathbbm{1}) \in \mathbb{R}_{\{n-k+1, \dots, n\}}$, $i = 1,\dots,n$, and $\sigma(v_1 - \mu \mathbbm{1}) \in \mathbb{R}_{\{n-k+1, \dots, n\}}$. By the generality of $v_i$'s, the vectors $\sigma(v_1 - \lambda_1 \mathbbm{1}), \dots, \sigma(v_{p} - \lambda_{p} \mathbbm{1}), \sigma(v_1 - \mu \mathbbm{1})$ are linearly independent. \qed
\end{proof}

We are now ready to state our main result on the dimension of the image of weights of a two-layer ReLU-activated neural network.
\begin{theorem}[Dimension of two-layer neural network I]\label{thm:opendim}
Let $n \ge d(p + 1) + 1$ where $p$ is the dimension of the input and $d$ is the dimension of the hidden layer. Then there is a nonempty open subset of samples $S \in \mathbb{R}^{n \times p}$ such that the image of weights for the two-layer network \eqref{eq:2layernet} has dimension
\begin{equation}\label{eq:dim2layer}
\dim \psi_2(\Theta) = d (p + 1) + 1.
\end{equation}
\end{theorem}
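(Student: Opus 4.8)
The plan is to match the upper bound from Corollary~\ref{cor:dimupbound} with an equal lower bound. For a sample with $\rk[S,\mathbbm{1}] = p+1$ --- which holds exactly when some $(p+1)\times(p+1)$ minor of $[S,\mathbbm{1}]$ is nonzero, a nonempty (since $n\ge p+1$) open and dense condition --- Corollary~\ref{cor:dimupbound} gives $\dim\psi_2(\Theta)\le d(p+1)+1$. Since $\psi_2$ is semialgebraic, $\dim\psi_2(\Theta)$ equals the largest rank attained by the differential $d\psi_2$ on the open dense locus where the parametrization is smooth. So the entire task reduces to exhibiting, for an open set of samples $S$, a single weight $\theta$ at which $\operatorname{rank} d\psi_2 = d(p+1)+1$.

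First I would write $\psi_2(\theta) = \sum_{i=1}^d c_i\,\sigma_{\max}(Sa_i + b_i\mathbbm{1}) + \lambda\mathbbm{1}$, where $a_i\in\mathbb{R}^p$ is the $i$th row of $A_1$, $b_i$ the $i$th entry of $b_1$, $(c_1,\dots,c_d)=A_2^\tp$, and $\lambda=b_2$, and compute $d\psi_2$ at a $\theta$ for which no coordinate of any $Sa_i+b_i\mathbbm{1}$ vanishes. There $\sigma_{\max}$ acts locally as the diagonal $0/1$ matrix $D_i$ whose $j$th entry is $1$ precisely when $s_j^\tp a_i + b_i>0$, with support $I_i:=\{\,j:s_j^\tp a_i+b_i>0\,\}$. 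The columns of $d\psi_2$ are then $\mathbbm{1}$ (from $\lambda$), $\sigma_{\max}(Sa_i+b_i\mathbbm{1})$ (from $c_i$), $c_iD_iS$ (from $a_i$) and $c_iD_i\mathbbm{1}$ (from $b_i$). Since $\sigma_{\max}(Sa_i+b_i\mathbbm{1})=D_i(Sa_i+b_i\mathbbm{1})$ already lies in the span of the $a_i$- and $b_i$-columns whenever $c_i\neq0$, the $c_i$-column is redundant and
\[
\operatorname{im} d\psi_2 \;=\; \spn\{\mathbbm{1}\} + \sum_{i=1}^d D_i\,\im[S,\mathbbm{1}],
\]
which is the key identity driving the argument (and incidentally re-derives Corollary~\ref{cor:dimupbound}).

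It remains to choose $S$ and $\theta$ so that the right-hand side has dimension $d(p+1)+1$. I would take all directions equal, $a_1=\dots=a_d=:a$, with $a$ and $S$ generic so that $s_1^\tp a,\dots,s_n^\tp a$ are distinct; after reindexing, $s_1^\tp a<\dots<s_n^\tp a$. By Lemma~\ref{lem:2ind}, choosing thresholds $-b_1>\dots>-b_d$ between consecutive values realizes the nested top-$k$ supports $I_i=\{\,n-i(p+1)+1,\dots,n\,\}$, so that $I_1\subset\dots\subset I_d$, the blocks $B_i:=I_i\setminus I_{i-1}$ (with $I_0=\varnothing$) are disjoint of size $p+1$, and $|I_d|=d(p+1)\le n-1$ by the hypothesis $n\ge d(p+1)+1$. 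Writing $D_i=\sum_{j\le i}P_j$ with $P_j:=D_j-D_{j-1}$ the projector onto $B_j$, a telescoping inclusion in both directions gives $\sum_{i=1}^d D_i\,\im[S,\mathbbm{1}]=\bigoplus_{j=1}^d P_j\,\im[S,\mathbbm{1}]$, a direct sum because the ranges $B_j$ are disjoint. For generic $S$ every $(p+1)\times(p+1)$ minor of $[S,\mathbbm{1}]$ is nonzero, so the $p+1$ rows of $[S,\mathbbm{1}]$ indexed by $B_j$ are independent and $\dim P_j\im[S,\mathbbm{1}]=p+1$, whence the sum has dimension $d(p+1)$. Finally, since $n\ge d(p+1)+1$ there is an index $j_0\notin I_d$; every vector in $\bigoplus_j P_j\im[S,\mathbbm{1}]$ vanishes in coordinate $j_0$ while $\mathbbm{1}$ does not, so $\mathbbm{1}$ is independent of the sum and $\operatorname{rank}d\psi_2=d(p+1)+1$. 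As distinctness of the $s_j^\tp a$, the strict sign patterns defining the $D_i$, and the nonvanishing of the relevant minors are all strict, they cut out an open set of samples $S$ on which the same $\theta$ keeps this rank; lower semicontinuity of the Jacobian rank together with the upper bound then yields \eqref{eq:dim2layer} on a nonempty open set.

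The step I expect to be most delicate is meeting two competing demands on the supports $I_i$ at once: they must be realizable as one-sided half-spaces $\{s_j^\tp a+b_i>0\}$ (linear separability, which for a single hyperplane permits only ``top-$k$'' sets in a given ordering), yet their contributions $D_i\im[S,\mathbbm{1}]$ must be linearly independent and still leave room for $\mathbbm{1}$. Forcing one common direction $a$ makes the realizable supports nested, and the telescoping into disjoint blocks $B_i$ converts that nesting into genuine independence. The extra ``$+1$'' in $n\ge d(p+1)+1$ is precisely what guarantees the uncovered coordinate $j_0$, without which $\mathbbm{1}$ would be absorbed into the secant part and the dimension would drop to $d(p+1)$.
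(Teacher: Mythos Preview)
Your proof is correct and shares the same combinatorial skeleton as the paper's: both pick the nested supports $I_i=\{n-i(p+1)+1,\dots,n\}$ (the paper calls them $J_i$), argue that the $d$ corresponding $(p{+}1)$-dimensional pieces are in direct sum, and use the leftover coordinate outside $I_d$ to make $\mathbbm{1}$ independent. The difference is in how the lower bound is extracted. The paper routes through Theorem~\ref{thm:ab1}, writing $\psi_2(\Theta)$ as the join of $\Sigma_d^\circ(\C_{\max}(S))$ with $\spn\{\mathbbm{1}\}$, decomposes $\C_{\max}(S)$ into the pieces $F_I(S)$, and then asserts that for ``sufficiently general'' $S$ in an explicit open set the spans $\spn F_{J_i}(S)$ form a direct sum. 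You instead compute the Jacobian of the parametrization at a smooth point and reduce everything to the identity $\im d\psi_2 = \spn\{\mathbbm{1}\}+\sum_i D_i\,\im[S,\mathbbm{1}]$; your telescoping $D_i=\sum_{j\le i}P_j$ then makes the direct sum completely explicit via disjoint coordinate blocks $B_j$, so the only genericity you invoke is the nonvanishing of finitely many $(p{+}1)\times(p{+}1)$ minors of $[S,\mathbbm{1}]$. Your argument is more self-contained and pins down exactly which open conditions on $S$ are being used; the paper's argument keeps the dimension count tied to the join/secant description of the ReLU cone that is the theme of the surrounding section.
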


\begin{proof}
The rows of $S=[s_1^\tp,\dots,s_n^\tp]^\tp \in \mathbb{R}^{n \times p}$ are the $n$ samples $s_1,\dots,s_n \in \mathbb{R}^p$. In this case it will be more convenient to consider the columns of $S$, which we will denote by $v_1, \dots, v_p \in \mathbb{R}^n$. Denote the coordinates by  $v_i = (v_{i, 1}, \dots, v_{i, n})^\tp$, $i =1,\dots,p$. Consider the nonempty open subset
\begin{equation}\label{eq:seqles}
U \coloneqq \{S = [v_1, \dots, v_{p}] \in \mathbb{R}^{n \times p} :  v_{i, 1} < \cdots < v_{i, n},\; i =1,\dots, p\}.
\end{equation}
Define the index sets $J_i \subseteq \{1,\dots,n\}$  by
\[
J_i \coloneqq \{n - i(p + 1) +1, \dots, n\}, \qquad i =1,\dots,d.
\]
By Lemma~\ref{lem:n1ind},
\[
\dim F_{J_i}(S) = \rk [S, \mathbbm{1}] = p + 1, \qquad i=1,\dots,d.
\]
When $S \in U$ is sufficiently general,
\begin{equation}\label{eq:directsum}
\spn  F_{J_1}(S) + \dots + \spn F_{J_{d}}(S) = \spn F_{J_1}(S) \oplus \dots \oplus \spn F_{J_{d}}(S).
\end{equation}
Given any $I \subseteq \{1,\dots,n\}$ with $F_I(S) \ne \varnothing$, we have that for any $x, y \in F_I(S)$ and any $a, b > 0$, $a x + b y \in F_I(S)$. This implies that
\begin{equation}\label{eq:dimspan}
\dim F_I(S) = \dim \spn F_I(S) = \dim \Sigma_r^{\circ} \bigl(F_I(S)\bigr)
\end{equation}
for any  $r \in \mathbb{N}$. Let $I_1,\dots,I_\ell \subseteq \{1,\dots,n\}$ and $\ell \in \mathbb{N}$ be chosen as in \eqref{eq:decomp}. Then
\[
\Sigma_{d}^{\circ} \bigl(\C_{\max}(S)\bigr) = \bigcup_{1 \le i_1 \le \dots \le i_{d} \le \ell } \join \bigl(F_{I_{i_1}}(S), \dots, F_{I_{i_{d}}}(S)\bigr).
\]
Now choose $I_{i_1} = J_1, \dots, I_{i_d} = J_d$. By \eqref{eq:directsum} and \eqref{eq:dimspan},
\[
\dim \join \bigl(F_{J_1}(S), \dots, F_{J_{d}}(S)\bigr) = \sum_{i =1}^d \dim F_{J_i}(S).
\]
Therefore
\begin{align*}
\dim \Sigma_{d}^{\circ} \bigl(\C_{\max}(S)\bigr) &= \dim \join \bigl(F_{J_1}(S), \dots, F_{J_{d}}(S)\bigr) + \dim \spn \{\mathbbm{1}\} \\
&= d \rk [S, \mathbbm{1}] + 1,
\end{align*}
which gives us \eqref{eq:dim2layer}. \qed
\end{proof}

A consequence of Theorem~\ref{thm:opendim} is that the dimension formula \eqref{eq:dim2layer} holds for any general sample $s_1, \dots, s_n \in \mathbb{R}^p$ when $n$ is sufficiently large.

\begin{corollary}[Dimension of two-layer neural network II]\label{cor:gendim2layer}
Let $n \gg pd$. Then for general $S \in \mathbb{R}^{n \times p}$, the image of weights for the two-layer network \eqref{eq:2layernet} has dimension
\[
\dim \psi_2(\Theta) = d (p + 1) + 1.
\]
\end{corollary}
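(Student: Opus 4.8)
The plan is to deduce the corollary from Theorem~\ref{thm:opendim} by promoting the nonempty \emph{open} locus of good samples to a \emph{general} (Zariski) one. First I would record that $n \gg pd$ certainly forces $n \ge d(p+1)+1$, so Theorem~\ref{thm:opendim} applies and produces a nonempty Euclidean-open set $U \subseteq \mathbb{R}^{n\times p}$ on which $\dim\psi_2(\Theta) = d(p+1)+1$. Being open, $U$ is already Zariski dense, but ``general'' in the sense of this paper demands the complement of a proper algebraic set, so the real work is to show the top value is attained off a proper Zariski-closed subset rather than on one chamber. The upper bound is immediate: $\rk[S,\mathbbm{1}] = p+1$ is the nonvanishing of a $(p+1)\times(p+1)$ minor, hence general (and nonempty since $n\ge p+1$), and there Corollary~\ref{cor:dimupbound} gives $\dim\psi_2(\Theta)\le d(p+1)+1$. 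Only the matching lower bound for general $S$ remains.

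For the lower bound I would revisit the face construction behind Theorem~\ref{thm:opendim}, the point being that it can be run in \emph{every} chamber, not only where all columns of $S$ are sorted. Writing $m_i \coloneqq (s_i,1)\in\mathbb{R}^{p+1}$, an index set $I$ yields a full face, i.e.\ $F_I(S)\neq\varnothing$ with $\dim F_I(S)=p+1$, exactly when some $\ell=[S,\mathbbm{1}]y\in L$ has $\ell_i>0$ for $i\in I$ and $\ell_i<0$ otherwise; since the last coordinate of each $m_i$ is $1$, this says precisely that $\{s_i : i\in I\}\subseteq\mathbb{R}^p$ is cut off from the remaining points by an affine hyperplane. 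For any direction $w$ the ``top-$k$ cap'' $\{i : \langle s_i,w\rangle \text{ among the } k \text{ largest}\}$ is such a halfspace-separable set, so fixing one generic $w$ and letting $J_i$ be the top-$i(p+1)$ cap produces, for every $S$, a nested chain $J_1\subset\cdots\subset J_d$ with $F_{J_i}(S)\neq\varnothing$ and $\dim F_{J_i}(S)=p+1$, exactly as in Lemma~\ref{lem:n1ind}. As $|J_d|=d(p+1)<n$, the vector $\mathbbm{1}$ is not supported on $J_d$, so once the spans $\spn F_{J_1}(S),\dots,\spn F_{J_d}(S)$ are in direct sum (condition \eqref{eq:directsum}), Theorem~\ref{thm:ab1} gives $\dim\psi_2(\Theta)=\dim\join(F_{J_1},\dots,F_{J_d})+\dim\spn\{\mathbbm{1}\}=d(p+1)+1$ just as in the proof of Theorem~\ref{thm:opendim}.

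It then remains to see that the exceptional set is negligible. For a \emph{fixed} chain $J_1\subset\cdots\subset J_d$ the direct-sum condition \eqref{eq:directsum} is the nonvanishing of a determinant polynomial in $S$, hence Zariski-open; and the ordering of the numbers $\langle s_i,w\rangle$ partitions $\mathbb{R}^{n\times p}$ into finitely many chambers, on each of which the chain $J_i$ is constant. Thus on each chamber the bound $\dim\psi_2(\Theta)\ge d(p+1)+1$ holds off the lower-dimensional locus where that determinant vanishes, and the union of these loci together with the chamber walls is a semialgebraic set of dimension $<np$. Its Zariski closure has the same dimension, hence is a proper algebraic subset, and outside it the two bounds coincide, giving $\dim\psi_2(\Theta)=d(p+1)+1$ for general $S$.

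I expect the main obstacle to be exactly this chamber-dependence: Theorem~\ref{thm:opendim} controls only the single open chamber of simultaneously sorted columns, so the genuine content is that the nested-cap construction survives in \emph{every} chamber with merely a lower-dimensional failure locus, so that finitely many such loci (one per chamber, plus walls) still sit inside one proper Zariski-closed set. The translation of ``$F_I(S)$ is a full face'' into ``$\{s_i : i\in I\}$ is halfspace-separable'' is what makes the per-chamber construction uniform, and the finiteness of the chamber decomposition—a semialgebraic fact—is what keeps the exceptional set algebraically negligible; these are the two points I would take the most care to justify rigorously.
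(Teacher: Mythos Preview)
Your plan is sound, and the route is genuinely different from the paper's. The paper's proof is a three-line reduction via Ramsey-type combinatorics: for sufficiently large $n$ and any $S$ with pairwise distinct entries in each column, iterated Erd\H{o}s--Szekeres produces a subset $I\subseteq\{1,\dots,n\}$ of size $d(p+1)+1$ on which \emph{every} column of $S$ is monotone; after reindexing and possibly negating columns, the submatrix $S_I$ lies in the open chamber $U$ of Theorem~\ref{thm:opendim}, and projecting $\psi_2(\Theta)$ onto the $I$-coordinates gives the lower bound. The hypothesis $n\gg pd$ is there precisely to feed the iterated Erd\H{o}s--Szekeres extraction.

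Your argument bypasses this by sorting along a single generic direction $w$: the nested top-$k$ caps are halfspace-separable in every chamber simultaneously, so you never need a monotone subsequence, and $n\ge d(p+1)+1$ already suffices. The cost is the explicit chamber bookkeeping and the semialgebraic step (finitely many lower-dimensional bad loci, one per chamber plus walls, still have proper Zariski closure), which the paper avoids by outsourcing the combinatorics. Your approach is more self-contained and quantitatively sharper in $n$; the paper's is shorter but quietly imports a nontrivial Ramsey statement and leaves the passage from ``$S$ general'' to ``$S_I$ satisfies the direct-sum condition \eqref{eq:directsum}'' implicit---exactly the gap you take pains to close.
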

\begin{proof}
Let  the notations be as in the proof of Theorem~\ref{thm:opendim}. When $n$ is sufficiently large, we can find a subset
\[
I = \{i_1, \dots, i_{d(p + 1) + 1}\} \subseteq \{1,\dots,n\}
\]
such that either
\[
v_{j, i_1} < \cdots < v_{j, i_{d(p + 1) + 1}}\qquad \text{or} \qquad v_{j, i_1} > \cdots > v_{j, i_{d(p + 1) + 1}}
\]
for each $j =1,\dots,p$. The  conclusion then follows from Theorem~\ref{thm:opendim}. \qed
\end{proof}

For deeper networks one may have $m\gg \dim \psi_k(\Theta)$ even for $n \gg 0$. Consider a $k$-layer network with one neuron in every layer, i.e.,
\[
d_1 = d_2 = \dots= d_k = d_{k+1}=1.
\]
For any samples $s_1,\dots, s_n\in \mathbb{R}$, we may assume $s_1\leq\dots\leq s_n$ without loss of generality. Then the image of weights $\psi_k(\Theta) \subseteq \mathbb{R}^n$ may be described as follows: a point $x=(x_1,\dots,x_n)^\tp \in \psi_k(\Theta)$ if and only if
\[
x_1=\dots=x_\ell \leq\dots\leq x_{\ell +\ell'}=\dots=x_n
\]
and
\[
x_{\ell +1},\dots,x_{\ell +\ell'-1}\text{ are the affine images of } s_{\ell +1},\dots,s_{\ell +\ell'-1}.
\]
In particular, as soon as $k\geq 3$ the image of weights $\psi_k(\Theta)$ does not change and its dimension remains constant for any $n\geq 6$. 

We next address the case where $q > 1$. One might think that by the $q=1$ case in Theorem~\ref{thm:ab1} and ``one-layer'' case in Corollary~\ref{cor:product},  the image of weights $\psi_2(\Theta)\subseteq \mathbb{R}^{n \times q}$ in this case is simply the direct sum of $q$ copies of  $\join\bigl(\Sigma_{d}^{\circ}(\C_{\max}(S)) , \spn \{ \mathbbm{1} \}\bigr)$. It is in fact only a subset of that, i.e.,
\[
\psi_2(\Theta) \subseteq \bigl\{[x_1,\dots, x_q] \in \mathbb{R}^{n \times q} :
x_1,\dots,x_d \in \join\bigl(\Sigma_{d}^{\circ}(\C_{\max}(S)) , \spn \{ \mathbbm{1} \}\bigr)\bigr\}
\]
but equality does not in general hold.
\begin{theorem}[Geometry of two-layer neural network II]\label{thm:abc}
Consider the two-layer network with $p$-dimensional inputs, $d$ neurons in the hidden layer, and $q$-dimensional outputs:
\begin{equation}\label{eq:3layer}
\mathbb{R}^{p}\xrightarrow{\alpha_1}\mathbb{R}^{d}\xrightarrow{\sigma_{\max}}\mathbb{R}^{d}\xrightarrow{\alpha_2} \mathbb{R}^{q}.
\end{equation}
The image of weights is given by
\begin{multline*}
\psi_2(\Theta) = \bigl\{[x_1,\dots, x_q] \in \mathbb{R}^{n \times q} : \text{there exist } y_1, \dots, y_d \in \C_{\max}(S)  \\
\text{such that } x_i \in \spn \{\mathbbm{1}, y_1, \dots, y_d\}, \; i = 1, \dots, d \bigr\}.
\end{multline*}
\end{theorem}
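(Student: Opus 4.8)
The plan is to imitate the column-by-column computation in the proof of Theorem~\ref{thm:ab1}, the only new feature being that for $q > 1$ the hidden-layer vectors $y_1,\dots,y_d$ are now shared across all $q$ output coordinates. First I would fix the parametrization exactly as before: write $\alpha_1(z) = A_1 z + b$ with rows $a_1^\tp,\dots,a_d^\tp$ of $A_1 \in \mathbb{R}^{d\times p}$ and coordinates $b_1,\dots,b_d$ of $b \in \mathbb{R}^d$, and write $\alpha_2(z) = A_2 z + c$ with $A_2 \in \mathbb{R}^{q\times d}$ and $c = (c_1,\dots,c_q)^\tp \in \mathbb{R}^q$. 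As in Theorem~\ref{thm:ab1}, for each $i = 1,\dots,d$ I set $y_i \coloneqq (\sigma_{\max}(s_1^\tp a_i + b_i),\dots,\sigma_{\max}(s_n^\tp a_i + b_i))^\tp \in \mathbb{R}^n$, which lies in $\C_{\max}(S)$ by Corollary~\ref{cor:product}. For the forward inclusion I would then read off the columns of $\psi_2(\theta)$: since $\sigma_{\max}(A_1 s_j + b) = ((y_1)_j,\dots,(y_d)_j)^\tp$, the $k$-th coordinate of $\nu_\theta(s_j) = A_2\,\sigma_{\max}(A_1 s_j + b) + c$ equals $\sum_{i=1}^d (A_2)_{ki}(y_i)_j + c_k$. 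Hence the $k$-th column of $\psi_2(\theta)$ is $x_k = \sum_{i=1}^d (A_2)_{ki}\, y_i + c_k\mathbbm{1}$, so each $x_k \in \spn\{\mathbbm{1}, y_1,\dots,y_d\}$ with one and the same $y_1,\dots,y_d$ for every $k = 1,\dots,q$. This establishes that $\psi_2(\Theta)$ is contained in the claimed set.

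For the reverse inclusion I would start from a matrix $[x_1,\dots,x_q]$ admitting $y_1,\dots,y_d \in \C_{\max}(S)$ with $x_k \in \spn\{\mathbbm{1}, y_1,\dots,y_d\}$ for all $k$, and reconstruct realizing weights. Writing $x_k = \sum_{i=1}^d \gamma_{ki}\, y_i + \lambda_k\mathbbm{1}$ and invoking the definition of the ReLU cone to obtain, for each $i$, some $a_i \in \mathbb{R}^p$ and $b_i \in \mathbb{R}$ with $y_i = (\sigma_{\max}(s_1^\tp a_i + b_i),\dots,\sigma_{\max}(s_n^\tp a_i + b_i))^\tp$, I would define $\alpha_1$ from the pairs $(a_i,b_i)$ and set $(A_2)_{ki} = \gamma_{ki}$, $c_k = \lambda_k$. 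The column computation of the previous paragraph then gives $\psi_2(\theta) = [x_1,\dots,x_q]$, proving the opposite inclusion and hence equality.

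The argument is not technically difficult; it is essentially an unwinding of the definition of $\psi_2$ together with Corollary~\ref{cor:product}. The one point I would make explicit, which is the substantive content of the statement, is the reason this set is only a subset of the naive direct sum of $q$ copies of $\join(\Sigma_{d}^{\circ}(\C_{\max}(S)), \spn\{\mathbbm{1}\})$: a single choice of first layer $\alpha_1$ produces a single tuple $y_1,\dots,y_d$, forcing all $q$ output columns to lie in the common span $\spn\{\mathbbm{1}, y_1,\dots,y_d\}$. The columns are not free to select their own $d$ generators from the ReLU cone independently, and it is precisely this shared-span coupling, rather than any analytic subtlety, that I would emphasize as the heart of the proof.
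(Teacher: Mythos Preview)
Your proposal is correct and follows essentially the same approach as the paper: both proofs unwind the definition of $\psi_2$ column-by-column, invoke the representation \eqref{eq:sec} from Theorem~\ref{thm:ab1} to identify each column $x_k$ as $c_k\mathbbm{1} + \sum_j a_{kj} y_j$, and stress that the same $y_1,\dots,y_d \in \C_{\max}(S)$ are shared across all $q$ columns. Your version is slightly more explicit in separating the two inclusions and in spelling out the reconstruction of weights for the reverse direction, whereas the paper compresses this into a single ``if and only if'' sentence, but there is no substantive difference.
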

\begin{proof}
Let $X = [x_1,\dots,x_{q}]\in \psi_2(\Theta)\subseteq \mathbb{R}^{n \times q}$. Suppose that the affine map $\alpha_2:\mathbb{R}^d \to \mathbb{R}^q$ is given by $\alpha_2(x) = Ax + b$ where $A = [a_1,\dots,a_{q}]\in \mathbb{R}^{d \times q}$ and $b = (b_1,\dots,b_q)^\tp\in \mathbb{R}^q$. Then each $x_i$ is realized as in \eqref{eq:sec} in the proof of Theorem~\ref{thm:ab1}. Therefore we conclude that $[x_1,\dots,x_q]\in \psi_2(\Theta)$ if and only if there exist $y_1,\dots, y_{d}\in \C_{\max}(S)$ with
\[
x_i = b_i\mathbbm{1}+\sum_{j=1}^{d} a_{ij} y_j, \qquad i =1,\dots,q,
\]
for some $b_i, a_{ij}\in\mathbb{R}$, $i=1,\dots,n$, $j =1,\dots,d$. \qed
\end{proof}

With Theorem~\ref{thm:abc}, we may deduce analogues of (part of) Theorem~\ref{thm:opendim} and Corollary~\ref{cor:gendim2layer} for the case $q > 1$. The proofs are similar to those of Theorem~\ref{thm:opendim} and Corollary~\ref{cor:gendim2layer}. 
\begin{corollary}[Dimension of two-layer neural network III]\label{cor:dim3}
The image of weights of the two-layer network \eqref{eq:3layer}  with $p$-dimensional inputs, $d$ neurons in the hidden layer, and $q$-dimensional output has dimension
\[
\dim \psi_2(\Theta) =(q+\rk [S, \mathbbm{1}])d+q.
\]
If the sample size $n$ is sufficiently large, then for general $S \in \mathbb{R}^{n \times p}$, the dimension is
\[
\dim \psi_2(\Theta) =(p + q+1)d + q.
\]
\end{corollary}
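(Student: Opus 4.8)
The plan is to compute the dimension of $\psi_2(\Theta)$ using the description obtained in Theorem~\ref{thm:abc}, which realizes the image of weights as the set of matrices $[x_1,\dots,x_q]$ where every column lies in a common subspace $\spn\{\mathbbm{1}, y_1,\dots,y_d\}$ with $y_1,\dots,y_d \in \C_{\max}(S)$. The essential observation is that this description exhibits $\psi_2(\Theta)$ as an image of a parametrization: we choose the $d$ cone vectors $y_1,\dots,y_d$ (contributing the ``hidden layer'' degrees of freedom, up to the $\C_{\max}(S)$ geometry) and then independently choose, for each of the $q$ output coordinates, the coefficients $b_i, a_{i1},\dots,a_{id}$ expressing $x_i$ in the basis $\{\mathbbm{1}, y_1,\dots,y_d\}$. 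The goal is to count these parameters while correctly accounting for the redundancy in the parametrization.

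First I would fix a generic choice of $y_1,\dots,y_d \in \C_{\max}(S)$ for which, by the arguments underlying Theorem~\ref{thm:opendim} (in particular Lemma~\ref{lem:n1ind} and the direct sum statement \eqref{eq:directsum}), the vectors $\mathbbm{1}, y_1,\dots,y_d$ are linearly independent, so they span a subspace of dimension $d+1$. The stratum of $\psi_2(\Theta)$ obtained by letting the coefficients $(b_i, a_{ij})$ vary while holding this subspace fixed is a linear space: each column $x_i$ ranges freely over the $(d+1)$-dimensional span, and the $q$ columns are chosen independently, contributing $q(d+1)$ dimensions. To this I would add the dimension of the space of admissible choices of the subspace $\spn\{\mathbbm{1}, y_1,\dots,y_d\}$ itself. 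Since $\mathbbm{1}$ is forced to lie in $\C_{\max}(S)$ and each generic $y_j$ varies in a $\rk[S,\mathbbm{1}]$-dimensional piece of the ReLU cone, and since moving the $y_j$ within the already-counted span does not produce new matrices, the net new directions contributed by varying the $y_j$ amount to $d(\rk[S,\mathbbm{1}]-1)$. Combining $q(d+1) + d(\rk[S,\mathbbm{1}]-1)$ and simplifying yields $(q + \rk[S,\mathbbm{1}])d + q$, which is the first claimed formula; substituting $\rk[S,\mathbbm{1}] = p+1$ for general $S$ of large sample size gives $(p+q+1)d+q$.

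The main obstacle, and the step requiring genuine care, is the bookkeeping of redundancy: one must verify that the naive parameter count is not inflated by overlap between the ``choice of subspace'' directions and the ``choice of coefficients'' directions, and that the generic stratum I describe really is full-dimensional in $\psi_2(\Theta)$ rather than a lower-dimensional sliver. Concretely, I would argue that at a generic point the parametrization map $(y_1,\dots,y_d, A, b) \mapsto [x_1,\dots,x_q]$ has a tangent space whose image has exactly the claimed dimension, by computing the rank of its differential at a generic point; the transversality statement \eqref{eq:directsum} guarantees the spans $\spn F_{J_i}(S)$ meet only in the expected way, which is what prevents dimension collapse. The final-sentence claim about general $S$ with $n$ sufficiently large then follows exactly as in Corollary~\ref{cor:gendim2layer}: for $n \gg pd$ one locates an index subset on which every column of $S$ is monotone, reducing to the open set $U$ of \eqref{eq:seqles} where $\rk[S,\mathbbm{1}] = p+1$, and the dimension formula specializes accordingly.
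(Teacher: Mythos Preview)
Your approach is exactly what the paper intends: it says only that ``the proofs are similar to those of Theorem~\ref{thm:opendim} and Corollary~\ref{cor:gendim2layer},'' and your sketch follows that template via Theorem~\ref{thm:abc}, the direct-sum transversality \eqref{eq:directsum}, and the monotone-subsequence reduction of Corollary~\ref{cor:gendim2layer}. The redundancy analysis is also sound: at a generic $y_j$ lying in a piece $F_{J_j}(S)$ with the $\spn F_{J_j}(S)$ in direct sum and $\mathbbm{1}$ independent of all of them, the tangent space $T_{y_j}\C_{\max}(S)=\spn F_{J_j}(S)$ meets $\spn\{\mathbbm{1},y_1,\dots,y_d\}$ only in $\spn\{y_j\}$, so each $y_j$ contributes exactly one redundant (scaling) direction, giving your count $q(d+1)+d\bigl(\rk[S,\mathbbm{1}]-1\bigr)$.

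The gap is arithmetic. Writing $r=\rk[S,\mathbbm{1}]$, one has
\[
q(d+1)+d(r-1)=qd+q+dr-d=(q+r-1)d+q,
\]
not $(q+r)d+q$. So your own computation does \emph{not} reproduce the formula in the statement; it is off by $d$. This is not a harmless slip you can patch: the missing $d$ is precisely the positive-scaling symmetry $(a_j,b_j,A_{2,\cdot j})\mapsto(\lambda a_j,\lambda b_j,\lambda^{-1}A_{2,\cdot j})$ of a ReLU layer, which forces every fiber of $\psi_2$ to be at least $d$-dimensional and hence $\dim\psi_2(\Theta)\le\dim\Theta-d$. Indeed, specializing the Corollary's formula to $q=1$ gives $(1+p+1)d+1=(p+2)d+1$, contradicting Theorem~\ref{thm:opendim}, and the paper's closing remark that $\dim\Theta=\dim\psi_2(\Theta)$ is likewise incompatible with this symmetry. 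Your intermediate count $(q+r-1)d+q$ is the one consistent with the $q=1$ case; the discrepancy lies in the stated corollary, and your ``simplification'' silently absorbed it. You should flag the mismatch rather than force the algebra to agree.
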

Note that by \eqref{eq:dim},
\[
\dim \Theta = (p+1)d + (d+1)q = \dim \psi_2(\Theta)
\]
in the latter case of Corollary~\ref{cor:dim3}, as we expect.

\section{Smooth activations}\label{sec:smooth}

For smooth activation like sigmoidal and hyperbolic tangent, we expect the geometry of the image of weights to be considerably more difficult to describe. Nevertheless when it comes to the ill-posedness of the best $k$-layer neural network problem \eqref{eq:approx2}, it is easy to deduce not only that there is a $T \in \mathbb{R}^{n \times q}$ such that \eqref{eq:approx2} does not attain its infimum, but that there there is a  positive-measured set of such $T$'s.

The phenomenon is already visible in the one-dimensional case  $p=q=1$ and can be readily extended to arbitrary $p$ and $q$. Take the sigmoidal activation $\sigma_{\exp}(x) =1/\bigl(1 + \exp(-x)\bigr)$. Let $n=1$. So the sample  $S$ and response matrix $T$ are both in $\mathbb{R}^{1 \times 1} = \mathbb{R}$. Suppose $S \ne 0$. Then for a $\sigma_{\exp}$-activated $k$-layer neural network of arbitrary $k \in \mathbb{N}$,
\[
\psi_k(\Theta)= (0,1).
\]
Therefore any $T\geq 1$ or $T\leq 0$ will not have a  best approximation by points in $\psi_k(\Theta)$.  The same argument works for the hyperbolic tangent activation $\sigma_{\tanh}(x)=\tanh(x)$ or indeed any activation $\sigma$ whose range is a proper open interval. In this sense, the ReLU activation  $\sigma_{\max}$ is special in that its range is not an open interval.

To show that the $n = 1$ assumption above is not the cause of the ill-posedness, we provide a more complicated example with $n = 3$. Again we will keep $p = q =1$ and let
\[
s_1 = 0, \quad s_2 = 1, \quad s_3 = 2; \quad t_1 = 0, \quad t_2 = 2, \quad t_3 = 1.
\]
Consider a $k = 2$ layer neural network with hyperbolic tangent activation
\begin{equation}\label{eq:tanh1}
\mathbb{R} \xrightarrow{\alpha_1} \mathbb{R} \xrightarrow{\sigma_{\tanh}} \mathbb{R} \xrightarrow{\alpha_2}\mathbb{R}.
\end{equation}
Note that its weights take the form
\[
\theta = (a, b, c, d) \in \mathbb{R} \times \mathbb{R} \times \mathbb{R} \times \mathbb{R} \cong \mathbb{R}^{4},
\]
and thus $\Theta = \mathbb{R}^4$. It is also straightforward to see that
\[
\psi_2(\Theta) =  \left\{
\begin{bmatrix*}[r]
c\dfrac{e^b - e^{-b}}{e^b + e^{-b}} + d\\[3ex]
c\dfrac{e^{a+b} - e^{a-b}}{e^{a+b} + e^{a-b}} + d\\[3ex]
c\dfrac{e^{2a+b} - e^{2a-b}}{e^{2a+b} + e^{2a-b}} + d
\end{bmatrix*} \in \mathbb{R}^3 : a,b,c,d \in \mathbb{R} \right\}.
\]

For $\varepsilon > 0$, consider the open set of response matrices
\[
U(\varepsilon) \coloneqq 
\left\{ \begin{bmatrix} t'_1 \\  t'_2 \\ t'_3 \end{bmatrix} \in \mathbb{R}^3 :  |t_1 - t'_1| \le \varepsilon,\; |t_2 - t'_2| \le \varepsilon,\; |t_3 - t'_3| \le \varepsilon \right\}.
\]
We claim that for $\varepsilon$ small enough, any response matrix $T' = (t'_1, t'_2, t'_3)^\tp \in U(\varepsilon)$ will not have a best approximation in $\psi_2(\Theta)$.

Any best approximation of $T = (0, 2, 1)^\tp$ in the closure of $\psi_2(\Theta)$ must take of the form $(0,y,y)^\tp$ for some $y \in [1, 2]$. On the other hand, $(0,y,y)^\tp \notin \psi_2(\Theta)$ for any $y \in [1, 2]$ and thus $T$ does not have a best approximation in $\psi_2(\Theta)$. Similarly, for small $\varepsilon > 0$ and $T' = (t'_1, t'_2, t'_3)^\tp \in U$, a best approximation of $T'$ in the closure of $\psi_2(\Theta)$ must take the form $(t'_1, y, y)^\tp$ for some $y \in [t'_3, t'_2]$. Since $(t'_1, y, y)^\tp \notin \psi_2(\Theta)$ for any $y \in [t'_3, t'_2]$, $T'$ has no best approximation in $\psi_2(\Theta)$. Thus for small enough $\varepsilon > 0$, the infimum  in \eqref{eq:approx2} is unattainable for any $T \in U(\varepsilon)$, a  nonempty open set.

We summarize the conclusion of the above discussion in the following proposition.
\begin{proposition}[Ill-posedness of neural network approximation II]\label{prop:ill2}
There exists  a positive measured set  $U \subseteq \mathbb{R}^{n \times q}$ and some $S \in \mathbb{R}^{n \times p}$ such that the best $k$-layer neural network approximation problem \eqref{eq:approx2} with  hyperbolic tangent activation $\sigma_{\tanh}$  does not attain its infimum for any $T \in U$.
\end{proposition}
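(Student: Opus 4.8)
The plan is to prove the statement by exhibiting a single explicit configuration and showing that the failure to attain the infimum persists on an open box of response matrices; openness then immediately yields positive measure. I would take the simplest nontrivial case $p = q = 1$, $n = 3$, $k = 2$ with activation $\sigma_{\tanh}$, the sample $s = (0,1,2)^\tp$, and the target $T = (0,2,1)^\tp$ from the discussion preceding the proposition, so that
\[
\psi_2(\Theta) = \bigl\{ c\,(\tanh b,\, \tanh(a+b),\, \tanh(2a+b))^\tp + d\,\mathbbm{1} : a,b,c,d \in \mathbb{R}\bigr\}.
\]
The first step is to isolate the correct invariant: since $d$ shifts all coordinates by a multiple of $\mathbbm{1}$ and $c$ rescales, the ratio $R(x) \coloneqq (x_2 - x_1)/(x_3 - x_1)$ is constant along the two-parameter $(c,d)$-orbit, hence is a complete invariant of the ``shape'' of a reachable point whenever $x_1 \ne x_3$.

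Second, I would determine the range of $R$ on $\psi_2(\Theta)$. Because $b, a+b, 2a+b$ form an arithmetic progression and $\tanh$ is strictly monotone, for $a \ne 0$ the increment over the first half is strictly smaller than, and of the same sign as, the increment over the whole, forcing $R \in (0,1)$; conversely $R$ is surjective onto $(0,1)$ by the intermediate value theorem (as $R \to 0,\tfrac12,1$ along suitable parameter limits) combined with the scaling/shift freedom. The boundary values $R = 0$ (i.e.\ $x_2 = x_1$), $R = 1$ (i.e.\ $x_2 = x_3$), and the diagonal $x_1=x_2=x_3$ arise only in the limits $a\to 0$ or $a\to\pm\infty$. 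This identifies
\[
\overline{\psi_2(\Theta)} = \{x \in \mathbb{R}^3 : (x_2 - x_1)(x_2 - x_3) \le 0\},
\]
a nonconvex ``wedge'' between the planes $\{x_2 = x_1\}$ and $\{x_2 = x_3\}$, while $\psi_2(\Theta)$ omits the two open facets and retains from each plane only the diagonal. The key algebraic fact to check here is that $x \in \psi_2(\Theta)$ with $x_2 = x_3$ forces $a = 0$ or $c = 0$, hence $x_1 = x_2 = x_3$; so off-diagonal facet points are genuinely missing.

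Third, I would project. For $T = (0,2,1)^\tp$ one has $(t_2 - t_1)(t_2 - t_3) = 2 > 0$, so $T \notin \overline{\psi_2(\Theta)}$ and $t_2$ exceeds both $t_1$ and $t_3$; the metric projection onto the wedge therefore lands on the facet $\{x_2 = x_3\}$ at the unique point $(0, \tfrac32, \tfrac32)^\tp \in \overline{\psi_2(\Theta)} \setminus \psi_2(\Theta)$, so the infimum equals $1/\sqrt{2}$ but is never attained. I would then propagate this to a neighborhood: for every $T' = (t_1',t_2',t_3')^\tp$ in a small box $U(\varepsilon)$ about $T$ one still has $t_2' > \max(t_1', t_3')$, the active constraint stays $x_2 = x_3$, and the unique nearest point is $\bigl(t_1', \tfrac12(t_2'+t_3'), \tfrac12(t_2'+t_3')\bigr)^\tp$, again off-diagonal since $t_2' + t_3' \ne 2t_1'$ near $T$, hence not in $\psi_2(\Theta)$. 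As $U(\varepsilon)$ is open it has positive measure, which is exactly the claim.

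The step I expect to be the main obstacle is the second one: pinning down $\overline{\psi_2(\Theta)}$ exactly, in particular proving surjectivity of $R$ onto $(0,1)$ and that the two facets are reached only in the limit. Once the closure is identified as the wedge, the remaining arguments are elementary, because near the relevant facet the wedge coincides locally with the half-space $\{x_2 \le x_3\}$, and orthogonal projection onto a half-space is unique and varies continuously with the target, giving both nonattainment at $T$ and its stability over $U(\varepsilon)$.
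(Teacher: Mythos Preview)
Your proposal is correct and follows essentially the same approach as the paper: the same example ($p=q=1$, $n=3$, $k=2$, $\sigma_{\tanh}$, $S=(0,1,2)^\tp$, $T=(0,2,1)^\tp$) and the same overall strategy of showing that the nearest point in $\overline{\psi_2(\Theta)}$ lies on the facet $\{x_2=x_3\}$ off the diagonal, then propagating to an open box $U(\varepsilon)$. Your argument is in fact considerably more complete than the paper's, which simply asserts without justification that any best approximation of $T'$ in the closure must take the form $(t_1',y,y)^\tp$; your introduction of the ratio invariant $R(x)=(x_2-x_1)/(x_3-x_1)$ and the resulting identification $\overline{\psi_2(\Theta)}=\{(x_2-x_1)(x_2-x_3)\le 0\}$ supply exactly the missing justification, and your computation of the unique projection $(t_1',\tfrac12(t_2'+t_3'),\tfrac12(t_2'+t_3'))^\tp$ sharpens the paper's vaguer ``some $y\in[t_3',t_2']$''.
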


We leave open the question as to whether Proposition~\ref{prop:ill2} holds for the ReLU activation $\sigma_{\max}$. Despite our best efforts, we are unable to construct an example nor show that such an example cannot possibly exist.
  
\section{Concluding remarks}

This article studies the best $k$-layer neural network approximation from the perspective of our earlier work \cite{QML}, where we studied similar issues for the best $k$-term approximation.  An important departure from \cite{QML} is that a neural network is not an algebraic object because the most common activation functions $\sigma_{\max}$, $\sigma_{\tanh}$, $\sigma_{\exp}$ are not polynomials; thus the algebraic techniques in \cite{QML} do not apply in our study here and are relevant at best only through analogy.

Nevertheless, by the Stone--Weierstrass theorem continuous functions may be uniformly approximated by polynomials. This suggests that it might perhaps be fruitful to study ``algebraic neural networks,'' i.e., where the activation function $\sigma$ is a polynomial function. This will allow us to apply the full machinery of algebraic geometry to deduce information about the image of weights $\psi_k(\Theta)$ on the one hand and to extend the field of interest from $\mathbb{R}$ to $\mathbb{C}$ on the other. In fact one of the consequences of our results in \cite{QML} is that for an algebraic neural network over $\mathbb{C}$, i.e., $\Theta = \mathbb{C}^m$, any response matrix $T \in \mathbb{C}^{n \times q}$ will almost always have a unique best approximation in $\psi_k(\mathbb{C}^m)$, i.e., the approximation problem \eqref{eq:approx2} attains its infimum with probability one.

Furthermore, from our perspective, the most basic questions about neural network approximations are the ones that we studied in this article but questions like:
\begin{description}
\item[\emph{generic dimension for neural networks}:] for a general  $S \in \mathbb{R}^{n \times p}$ with $n\gg 0$, what is the dimension of $\psi_k(\Theta)$?

\item[\emph{generic rank for neural networks}:] what is the smallest value of $k \in \mathbb{N}$ such that  $\psi_k(\Theta)$ is a dense set in $\mathbb{R}^{n \times q}$?
\end{description}
These are certainly the questions that one would first try to answer about various types of tensor ranks \cite{Landsberg} or tensor networks \cite{YL} but as far as we know, they have never been studied for neural networks. We leave these as directions for potential future work.

\begin{acknowledgements}
The work of YQ and LHL is supported by DARPA D15AP00109 and NSF IIS 1546413. In addition LHL acknowledges support from a DARPA Director's Fellowship and the Eckhardt Faculty Fund. MM would like to thank Guido Montufar for helpful discussions.
\end{acknowledgements}

\bibliographystyle{spmpsci}      

\end{document}